\newif\iffinal
\newcommand{\tikzmark}[1]{\tikz[overlay,remember picture] \node (#1) {};}
\newcommand{\figleft}{{\em (Left)}}
\newcommand{\figright}{{\em (Right)}}
\newcommand{\figtop}{{\em (Top)}}
\newcommand{\figbottom}{{\em (Bottom)}}
\def\eqref#1{equation~\ref{#1}}
\def\1{\bm{1}}
\DeclareMathAlphabet{\mathsfit}{\encodingdefault}{\sfdefault}{m}{sl}
\SetMathAlphabet{\mathsfit}{bold}{\encodingdefault}{\sfdefault}{bx}{n}
\def\gD{{\mathcal{D}}}
\def\gL{{\mathcal{L}}}
\def\gP{{\mathcal{P}}}
\def\gS{{\mathcal{S}}}
\newcommand{\E}{\mathbb{E}}
\DeclareMathOperator*{\argmax}{arg\,max}
\newcommand{\CE}{\mathcal{CE}}
\newcommand{\cs}{\mathbf{s_t}}  %
\newcommand{\ca}{\mathbf{a_t}}  %
\newcommand{\ns}{\mathbf{s_{t+1}}}  %
\newcommand{\na}{\mathbf{a_{t+1}}}  %
\newcommand{\fs}{\mathbf{s_{t+}}}  %
\newcommand{\e}{\mathbf{e_t}}  %
\newcommand{\nee}{\mathbf{e_{t+1}}}  %
\newcommand{\fe}{\mathbf{e_{t+}}}  %
\newtheorem{theorem}{Theorem}[section]
\newtheorem{corollary}{Corollary}[theorem]
\newtheorem{lemma}[theorem]{Lemma}
\newtheorem{definition}{Definition}
\DeclarePairedDelimiterX{\infdivx}[2]{(}{)}{%
  #1\;\delimsize\|\;#2%
}
\title{Replacing Rewards with Examples: Example-Based Policy Search via Recursive Classification}
\author{%
  Benjamin Eysenbach$^{1\,2}$ \qquad Sergey Levine$^{2\,3}$ \qquad Ruslan Salakhutdinov$^{1}$ \\
  $^{1}$Carnegie Mellon University, \quad $^{2}$Google Brain, \quad $^{3}$UC Berkeley \\
  \texttt{beysenba@cs.cmu.edu}
}
\begin{document}

\maketitle

\begin{abstract}
Reinforcement learning (RL) algorithms assume that users specify tasks by manually writing down a reward function. However, this process can be laborious and demands considerable technical expertise. Can we devise RL algorithms that instead enable users to specify tasks simply by providing examples of successful outcomes? In this paper, we derive a control algorithm that maximizes the future probability of these successful outcome examples. Prior work has approached similar problems with a two-stage process, first learning a reward function and then optimizing this reward function using another RL algorithm. In contrast, our method directly learns a value function from transitions and successful outcomes, without learning this intermediate reward function. Our method therefore requires fewer hyperparameters to tune and lines of code to debug. We show that our method satisfies a new data-driven Bellman equation, where examples take the place of the typical reward function term. Experiments show that our approach outperforms prior methods that learn explicit reward functions.\footnote{Project site with videos and code:
\iffinal
    \url{https://ben-eysenbach.github.io/rce}
\else
    \url{https://rce-anonymous.github.io/}
\fi
}
\end{abstract}

\vspace{-0.2em}
\section{Introduction}
\vspace{-0.2em}
\label{sec:intro}

In supervised learning settings, tasks are defined by data: what causes a car detector to detect cars is not the choice of loss function (which might be the same as for an airplane detector), but the choice of training data. Defining tasks in terms of data, rather than specialized loss functions, arguably makes it easier to apply machine learning algorithms to new domains.
In contrast, reinforcement learning (RL) problems are typically posed in terms of reward functions, which are typically manually designed. 
Arguably, the challenge of designing reward functions has limited RL to applications with simple reward functions, and has been restricted to users who speak this language of mathematically-defined reward functions.
Can we make task specification in RL similarly data-driven?

Whereas the standard MDP formalism centers around predicting and maximizing the future reward, we will instead focus on the problem classifying whether a task will be solved in the future.
The user will provide a collection of example success states, not a reward function. We call this problem setting \emph{example-based control}. In effect, these examples tell the agent ``What would the world look like if the task were solved?" For example, for the task of opening a door, success examples correspond to different observations of the world when the door is open.
The user can find examples of success even for tasks that they themselves do not know how to solve. For example, the user could solve the task using actions unavailable to the agent (e.g., the user may have two arms, but a robotic agent may have only one) or the user could find success examples by searching the internet.
As we will discuss in Sec.~\ref{sec:problem}, this problem setting is different from imitation learning: we maximize a different objective function and only assume access to success examples, not entire expert trajectories.

\begin{figure}[t]
\centering
\vspace{-2.5em}
\includegraphics[width=0.9\linewidth]{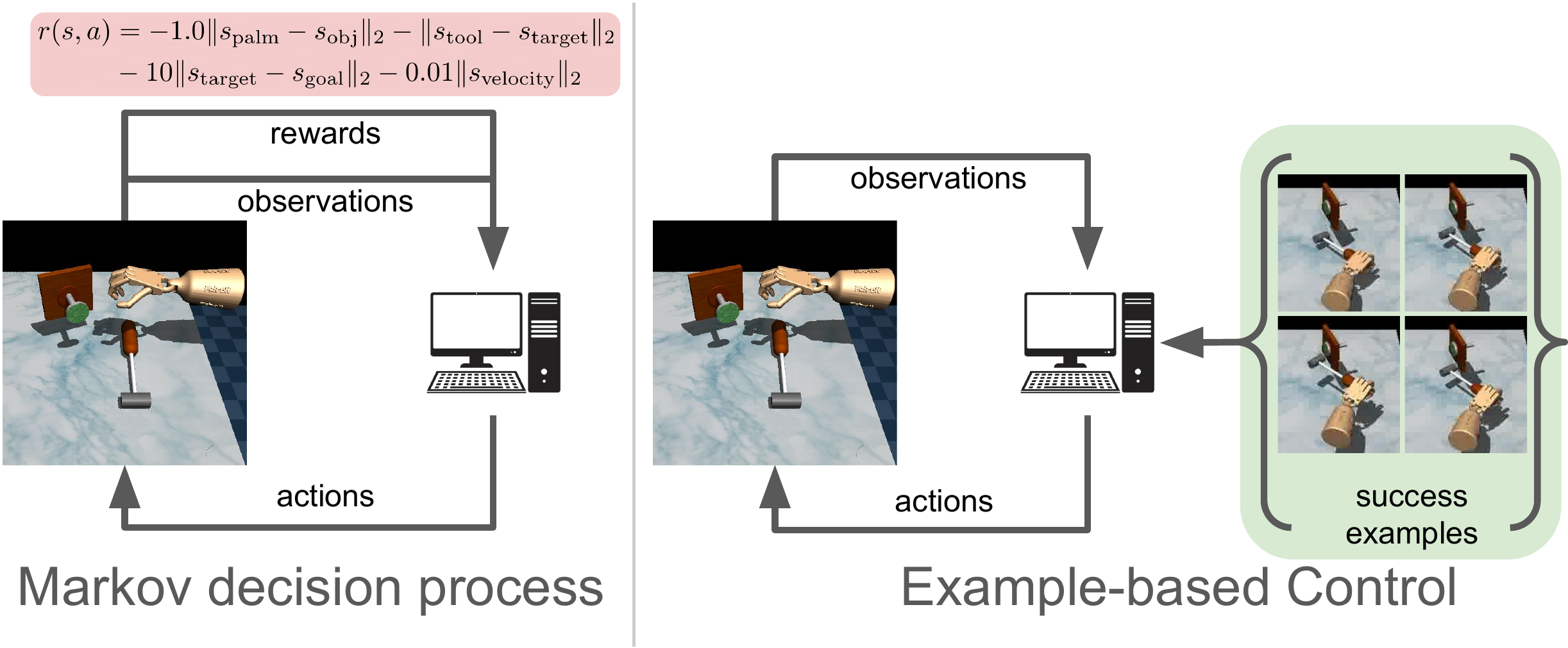}
\vspace{-1.em}
\caption{{\footnotesize \textbf{Example-based control}: Whereas the standard MDP framework requires a user-defined reward function, example-based control specifies tasks via a handful of user-provided success examples.}}
\vspace{-1.5em}
\label{fig:mdp}
\end{figure}

Learning from examples is challenging because we must automatically identify when the agent has solved the task and reward it for doing so.
Prior methods (either imitation learning from demonstrations or learning from success examples) take an indirect approach that resembles inverse RL: first learn a separate model to represent the reward function, and then optimize this reward function with standard RL algorithms.
Our method is different from these prior methods because it learns to predict \emph{future} success directly from transitions and success examples, without learning a separate reward function.
This key difference has important algorithmic, theoretical, and empirical benefits.
\textbf{Algorithmically}, our end-to-end approach removes potential biases in learning a separate reward function, reduces the number of hyperparameters, and simplifies the resulting implementation.
\textbf{Theoretically}, we propose a method for classifying \emph{future} events using a variant of temporal difference learning that we call recursive classification. This method satisfies a new Bellman equation, where success examples are used in place of the standard reward function term. We use this result to provide convergence guarantees.
\textbf{Empirically}, we demonstrate that our method solves many complex manipulation tasks that prior methods fail to solve.

Our paper also addresses a subtle but important ambiguity in formulating example-based control.
Some states might always solve the task while other states might rarely solve the task. But, without knowing how often the user visited each state, we cannot determine the likelihood that each state solves the task. Thus, an agent can only estimate the probability of success by making an additional assumption about how the success examples were generated. We will discuss two choices of assumptions. The first choice of assumption is convenient from an algorithmic perspective, but is sometimes violated in practice. A second choice is a worst-case approach, resluting in a problem setting that we call \emph{robust example-based control}. Our analysis shows that the robust example-based control objective is equivalent to minimizing the squared Hellinger distance (an $f$-divergence).

In summary, this paper studies a data-driven framing of control, where reward functions are replaced by examples of successful outcomes. %
Our main contribution is an algorithm for off-policy example-based control. The key idea of the algorithm is to directly learn to predict whether the task will be solved in the future via \emph{recursive classification}, without using separate reward learning and policy search procedures. Our analysis shows that our method satisfies a new Bellman equation where rewards are replaced by data (examples of success). Empirically, our method significantly outperforms state-of-the-art imitation learning methods (AIRL~\citep{fu2017learning}, DAC~\citep{kostrikov2018discriminator}, and SQIL~\citep{reddy2020sqil}) and recent methods that learn reward functions (ORIL~\citep{zolna2020offline}, PURL~\citep{xu2019positive}, and VICE~\citep{fu2018variational}).
Our method completes complex tasks, such as picking up a hammer to knock a nail into a board, tasks that \emph{none} of these baselines can solve.
Using tasks with image observations, we demonstrate agents learned with our method acquire a notion of success that generalizes to new environments with varying shapes and goal locations.

\vspace{-0.2em}
\section{Related Work}
\vspace{-0.2em}
\label{sec:related-work}

\paragraph{Learning reward functions.}
Prior works have studied RL in settings where the task is specified either with examples of successful outcomes or complete demonstrations. These prior methods typically learn a reward function from data and then apply RL to this reward function (e.g.,~\citet{ziebart2008maximum, fu2018variational}).
Most inverse RL algorithms adopt this approach~\citep{pomerleau1989alvinn, abbeel2004apprenticeship, ratliff2006maximum, ziebart2008maximum, ross2011reduction, wulfmeier2015maximum}, as do more recent methods that learn a \emph{success classifier} to distinguishing successful outcomes from random states~\citep{fu2018variational, singh2019end, zolna2020offline, konyushkova2020semi}.
Prior adversarial imitation learning methods~\citep{ho2016, fu2017learning} can be viewed as \emph{iteratively} learning a success classifier.
Recent work in this area focuses on extending these methods to the offline setting~\citep{zolna2020offline, konyushkova2020semi}, incorporating additional sources of supervision~\citep{zolna2019task}, and learning the classifier via positive-unlabeled classification~\citep{xu2019positive, irpan2019off, zolna2020offline}. Many prior methods for robot learning have likewise used a classifier to distinguish success examples~\citep{calandra2017feeling, xie2018few, vecerik2019practical, lu2020multi}.
Unlike these prior methods, our approach only requires examples of successful outcomes (not expert trajectories) and does not learn a separate reward function.
Instead, our method learns a value function directly from examples,
effectively ``cutting out the middleman.''
This difference from prior work removes hyperparameters and potential bugs associated with learning a success classifier.
Empirically, we demonstrate that our end-to-end approach outperforms these prior two-stage approaches. See Appendix~\ref{appendix:connections} for more discussion of the relationship between our method and prior work.

\paragraph{Imitation learning without auxiliary classifiers.}
While example-based control is different from imitation learning, our method is similar to two prior imitation learning methods that likewise avoid learning a separate reward function~\citep{kostrikov2019imitation, reddy2020sqil}.
ValueDICE~\citep{kostrikov2019imitation}, a method based on convex duality, uses full expert demonstrations for imitation learning. In contrast, our method learns from success examples, which are typically easier to provide than full expert demonstrations.
SQIL~\citep{reddy2020sqil} is a modification of SAC~\citep{haarnoja2018soft} that labels success examples with a reward of $+1$. 
The mechanics of our method are similar to SQIL~\citep{reddy2020sqil}, but key algorithmic differences (backed by stronger theoretical guarantees) result in better empirical performance.
Our analysis in Sec.~\ref{sec:robust} highlights connections and differences between imitation learning and example-based~control.

\paragraph{Goal-conditioned RL.}
Goal-conditioned RL provides one way to specify tasks in terms of data, and prior work has shown how goal-conditioned policies can be learned directly from data, without a reward function~\citep{kaelbling1993learning, schaul2015universal, lin2019reinforcement, eysenbach2020c}.
However, the problem that we study in this paper, example-based control, is different from goal-conditioned RL because it allows users to indicate that tasks can be solved in many ways, enabling the agent to learn a more general notion of success.
Perhaps the most similar prior work in this area is C-learning~\citep{eysenbach2020c}, which uses a temporal-difference update to learn the probability that a goal state will be reached in the future.
Our method will use a similar temporal-difference update to predict whether \emph{any} success example will be reached in the future. Despite the high-level similarity with C-learning, our algorithm will be markedly different; for example, our method does not require hindsight relabeling, and learns a single policy rather than a goal-conditioned policy.

\vspace{-0.2em}
\section{Example-Based Control via Recursive Classification}
\vspace{-0.2em}

We aim to learn a policy that reaches states that are likely to solve the task (see Fig.~\ref{fig:mdp}), without relying on a reward function. We start by formally describing this problem, which we will call \emph{example-based control}. We then propose a method for solving this problem and provide convergence guarantees.

\vspace{-0.2em}
\subsection{Problem Statement}
\vspace{-0.2em}
\label{sec:problem}

Example-based control is defined by a controlled Markov process (i.e., an MDP without a reward function) with dynamics $p(\ns \mid \cs, \ca)$ and an initial state distribution $p_1(\mathbf{s_1})$, where $\cs \in \gS$ and $\ca$ denote the time-indexed states and actions. The variable $\mathbf{s_{t + \Delta}}$ denotes a state $\Delta$ steps in the future.

The agent is given a set of \emph{success examples}, $\gS^* = \{\mathbf{s^*}\} \subseteq \gS$. The random variable $\e \in \{0, 1\}$ indicates whether the task is solved at time $t$, and $p(\e \mid \cs)$ denotes the probability that the current state $\cs$ solves the task. Given a policy $\pi_\phi(\ca \mid \cs)$,
we define the discounted future state distribution:
\begin{equation}
    p^\pi(\fs \mid \cs, \ca) \triangleq (1 - \gamma) \sum_{\Delta = 0}^\infty \gamma^\Delta p^\pi(\mathbf{s_{t + \Delta}} = \fs \mid \cs, \ca).
\end{equation}
Using this definition, we can write the probability of solving the task at a \emph{future} step as
\begin{equation}
    p^\pi(\fe \mid \cs, \ca) \triangleq \E_{p^\pi(\fs \mid \cs, \ca)}[p(\fe \mid \fs)]. \label{eq:future-prob}
\end{equation}
Example-based control maximizes the probability of solving the task in the (discounted) future:
\begin{definition}[Example-based control] \label{def:event-based-control}
Given a controlled Markov process and distribution over success examples $p(\cs \mid \e = 1)$, the example-based control problem is to find the policy that optimizes the likelihood of solving the task:
\begin{equation*}
    \argmax_\pi p^\pi(\fe = 1) = \E_{p_1(\mathbf{s_1}), \pi(\mathbf{a_1} \mid \mathbf{s_1})}\left[p^\pi(\fe = 1\mid \mathbf{s_1}, \mathbf{a_1}) \right].
\end{equation*}
\end{definition}
Although this objective is equivalent to the RL objective with rewards $r(\cs, \ca) = p(\e =1 \mid \cs)$,
we assume the probabilities $p(\e \mid \cs)$ are unknown. Instead, we assume that we have samples of successful states, $\mathbf{s^*} \sim p_U(\cs \mid \e = 1)$.
Example-based control differs from imitation learning because imitation learning requires full expert demonstrations. In the special case where the user provides a single success state, example-based control is equivalent to goal-conditioned RL.

Since interacting with the environment to collect experience is expensive in many settings, we define \emph{off-policy example-based control} as the version of this problem where the agent learns from environment interactions collected from other policies. In this setting, the agent learns from two distinct datasets: \textbf{(1)} transitions, $\{(\cs, \ca, \ns) \sim p_U(\cs, \ca, \ns)\}$, which contain information about the environment dynamics; and \textbf{(2)} success examples, $\gS^* = \{\mathbf{s^*} \sim p_U(\cs \mid \e = 1)\}$, which specify the task that the agent should attempt to solve. Our analysis will assume that these two datasets are fixed. The main contribution of this paper is an algorithm for off-policy example-based~control. 

\paragraph{An assumption on success examples.}
The probability of solving the task at state $\cs$, $p(\e = 1 \mid \cs)$, cannot be uniquely determined from success examples and transitions alone. To explain this ambiguity, we define $p_U(\cs)$ as the state distribution visited by the user; note that the user may be quite bad at solving the task themselves. Then, the probability of solving the task at state $\cs$ depends on how often a user visits state $\cs$ versus how often the task is solved when visiting state $\cs$:
\begin{equation}
    p(\e = 1 \mid \cs) = \frac{p_U(\cs \mid \e = 1)}{p_U(\cs)}p_U(\e = 1). \label{eq:bayes}
\end{equation}
For example, the user may complete a task using two strategies, but we cannot determine which of these strategies is more likely to succeed unless we know how often the user attempted each strategy. Thus, any method that learns from success examples \emph{must} make an additional assumption on $p_U(\cs)$. We will discuss two choices of assumptions. The \textbf{first choice} is to assume that the user visited states with the same frequency that they occur in the dataset of transitions. That is,
\begin{equation}
    p_U(\cs) = \iint p_U(\cs, \ca, \ns) d\ca d \ns. \label{eq:assumption}
\end{equation}
Intuitively, this assumption implies that the user has the same capabilities as the agent. Prior work makes this same assumption without stating it explicitly~\citep{fu2018variational, singh2019end, nasiriany2020disco}. Experimentally, we find that our method succeeds even in cases where this assumption is violated.

However, many common settings violate this assumption, especially when the user has different dynamics constraints than the agent. For example, a human user collecting success examples for a cleaning task might usually put away objects on a shelf at eye-level, whereas transitions collected by a robot interact with the ground-level shelves more frequently. Under our previous assumption, the robot would assume that putting objects away on higher shelves is more satisfactory than putting them away on lower shelves, even though doing so might be much more challenging for the robot.
To handle these difference in capabilities, the \textbf{second choice} is to use a \emph{worst-case} formulation, which optimizes the policy to be robust to \emph{any} choice of $p_U(\cs)$. Surprisingly, this setting admits a tractable solution, as we discuss in Sec.~\ref{sec:robust}.

\subsection{Predicting Future Success by Recursive Classification}
\label{sec:method}

We now describe our method for example-based control. We start with the more standard \textbf{first choice} for the assumption on $p_U(\cs)$ (Eq.~\ref{eq:assumption}); we discuss the second choice in Sec.~\ref{sec:robust}.
Our approach estimates the probability in Eq.~\ref{eq:future-prob} indirectly via a future success classifier. This classifier, $C_\theta^\pi(\cs, \ca)$, discriminates between ``positive'' state-action pairs which lead to successful outcomes (i.e., sampled from $p(\cs, \ca \mid \fe = 1)$) and random ``negatives'' (i.e., sampled from the marginal distribution $p(\cs, \ca)$).
We will use different class-specific weights, using a weight of $p(\fe = 1)$ for the ``positives'' and a weight of $1$ for the ``negatives.'' Bayes-optimal classifier is
\begin{equation}
C_\theta^\pi(\cs, \ca) = \frac{p^\pi(\cs, \ca \mid \fe = 1) p(\fe = 1)}{p^\pi(\cs, \ca \mid\fe = 1) p(\fe = 1) + p(\cs, \ca)}. \label{eq:bayes-opt}
\end{equation}
These class specific weights let us predict the probability of future success using the optimal classifier:
\begin{align}
    \frac{C_\theta^\pi(\cs, \ca)}{1 - C_\theta^\pi(\cs, \ca)} &= p^\pi(\fe = 1 \mid \cs, \ca). \label{eq:posterior}
\end{align}
Importantly, the resulting method will not actually require estimating the weight $p(\fe = 1)$.
We would like to optimize the classifier parameters using maximum likelihood estimation:
\begin{align}
    \gL^\pi (\theta) \triangleq \; & p(\fe = 1) \; \E_{p(\cs, \ca \mid \fe = 1)} [\log C_\theta^\pi(\cs, \ca)] + \E_{p(\cs, \ca)} [\log (1 - C_\theta^\pi(\cs, \ca))]. \label{eq:obj-1}
\end{align}
However, we cannot directly optimize this objective because we cannot sample from \mbox{$p(\cs, \ca \mid \fe = 1)$}. We convert Eq.~\ref{eq:obj-1} into an equivalent loss function that we can optimize using three steps; see Appendix~\ref{appendix:derivation} for a detailed derivation. The \textbf{first step} is to factor the distribution $p(\cs, \ca, \fe = 1)$. The \textbf{second step} is to decompose $p^\pi(\fe = 1 \mid \cs, \ca)$ into two terms, corresponding to the probabilities of solving the task at time $t' = t+1$ and time $t' > t + 1$. We can estimate the probability of solving the task at the next time step using the set of success examples. The \textbf{third step} is to estimate the probability of solving the task at time $t' > t + 1$ by evaluating the classifier at the next time step. Combining these three steps, we can \emph{equivalently} express the objective function in Eq.~\ref{eq:obj-1} using off-policy data:
\footnotesize \begin{align}
    \gL^\pi(\theta) = & (1 - \gamma) \E_{\substack{p_U(\cs \mid \e = 1) \\p(\ca \mid \cs)}}[\underbrace{\log C_\theta^\pi(\cs, \ca)}_{(a)}] + \E_{p(\cs, \ca, \ns)} [\underbrace{\gamma w \log C_\theta^\pi(\cs, \ca)}_{(b)} + \underbrace{\log (1 - C_\theta^\pi(\cs, \ca))}_{(c)} ],\label{eq:final-obj}
\end{align} \normalsize
where
\begin{equation}
    w = \E_{p(\na \mid \ns)} \bigg[\frac{C_\theta^\pi(\ns, \na)}{1 - C_\theta^\pi(\ns, \na)} \bigg] \label{eq:w}
\end{equation}
is the classifier's prediction (ratio) at the next time step. Our resulting method can be viewed as a temporal difference~\citep{sutton1995td} approach to classifying future events. We will refer to our method as \textbf{recursive classification of examples (RCE)}. 
This equation has an intuitive interpretation. The first term \emph{(a)} trains the classifier to predict $1$ for the success examples themselves, and the third term \emph{(c)} trains the classifier to predict $0$ for random transitions. The important term is the second term \emph{(b)}, which is analogous to the ``bootstrapping'' term in temporal difference learning~\citep{sutton1988learning}. Term \emph{(b)} indicates that the probability of future success depends on the probability of success at the \emph{next} time step, as inferred using the classifier's own predictions.

Our resulting method is similar to existing actor-critic RL algorithms. 
To highlight the similarity to existing actor-critic methods, we can combine the \emph{(b)} and \emph{(c)} terms in the classifier objective function (Eq.~\ref{eq:final-obj}) to express the loss function in terms of two cross entropy losses:
\footnotesize \begin{align}
    \min_\theta \; & (1 - \gamma) \E_{\substack{p(\cs \mid \e =1),\\\ca \sim \pi(\ca \mid \cs)}} \left[ \CE(C_\theta^\pi(\cs, \ca); y = 1) \right] + (1 + \gamma w) \E_{p(\cs, \ca, \ns)} \Big[ \CE \big( C_\theta^\pi(\cs, \ca); y = \frac{\gamma w}{\gamma w + 1} \big) \Big]. \label{eq:ce}
\end{align} \normalsize
These cross entropy losses update the classifier to predict $y = 1$ for the success examples and to predict $y = \frac{\gamma w}{1 + \gamma w}$ for other states.

\begin{figure}[t]
\vspace{-2em}
\begin{algorithm}[H]
\caption{Recursive Classification of Examples}\label{alg:method}
{\footnotesize
\begin{algorithmic}[]
\State \textbf{Input}: success examples $\gS^*$
\State Initialize policy $\pi_\phi(\ca \mid \cs)$, classifier $C_\theta^\pi(\cs, \ca)$, replay buffer $\gD$
\While{not converged}
\State Collect a new trajectory: $\gD \gets \gD \cup \{\tau \sim \pi_\phi\}$
\State Sample success examples: $\{\cs^{(1)} \sim \gS^*, \ca^{(1)} \sim \pi_\phi(\ca \mid  \cs^{(1)}) \}$
\State Sample transitions: $\{(\cs^{(2)}, \ca^{(2)}, \ns) \sim \gD, \na \sim \pi_\phi(\na \mid \ns)\}$
\State $w \gets \frac{C_\theta^\pi(\ns, \na)}{1 - C_\theta^\pi(\ns, \na)}$ \Comment{Eq.~\ref{eq:w}}
\State $\gL(\theta) \gets (1 - \gamma) \CE(C_\theta(\cs^{(1)}, \ca^{(1)}); y = 1) + (1 + \gamma w) \CE(C_\theta(\cs^{(2)}, \ca^{(2)}); y = \frac{\gamma w}{1 + \gamma w})$
\State Update classifier: $\theta \gets \theta + \eta \nabla_\theta \gL(\theta)$ \Comment{Eq.~\ref{eq:final-obj}}
\State Update policy: $\phi \gets \phi + \eta \nabla_\phi \E_{\pi_\phi(\ca \mid \cs)}[C_\theta(\cs, \ca)]$  %
\EndWhile
\State \textbf{return} $\pi_\phi$
\end{algorithmic}}
\end{algorithm}
\vspace{-2em}
\end{figure}

\paragraph{Algorithm summary.}
Alg.~\ref{alg:method} summarizes our method, which alternates between updating the classifier, updating the policy, and (optionally) collecting new experience.
We update the policy to choose actions that maximize the classifier's confidence that the task will be solved in the future: $\max_\phi \E_{\pi_\phi(\ca \mid \cs)}[C_\theta^\pi(\cs, \ca)]$.
Following prior work~\citep{williams1991function, fox2015taming}), we regularized the policy updates by adding an entropy term with coefficient $\alpha = 10^{-4}$.
We also found that using N-step returns significantly improved the results of RCE (see Appendix~\ref{appendix:ablation} for details and ablation experiments.).
Implementing our method on top of existing methods such as SAC~\citep{haarnoja2018soft} or TD3~\citep{fujimoto2018addressing} requires only changing the standard Bellman loss with the loss in Eq.~\ref{eq:ce}.
See Appendix~\ref{appendix:details} for implementation details; code is available on the project website.

\vspace{-0.2em}
\section{Analysis}
\vspace{-0.2em}

In this section, we prove that RCE satisfies many of the same convergence and optimality guarantees (for example-based control) that standard RL algorithms satisfy (for reward-based MDPs). These results are important as they demonstrate that formulating control in terms of data, rather than rewards, does preclude algorithms from enjoying strong theoretical guarantees.
Proofs of all results are given in Appendix~\ref{appendix:proofs}, and we include a further discussion of how RCE relates to prior work in Appendix~\ref{appendix:connections}.

\subsection{Bellman Equations and Convergence Guarantees}
To prove that RCE converges to the optimal policy, we will first show that RCE satisfies a new Bellman equation:
\begin{lemma} \label{lemma:bellman}
The Bayes-optimal classifier $C^\pi$ for policy $\pi$ satisfies the following identity:
\begin{align}
    \frac{C^\pi(\cs, \ca)}{1 - C^\pi(\cs, \ca)} =& (1 - \gamma) p(\e = 1 \mid \cs) + \gamma \E_{\substack{p(\ns \mid \cs, \ca) \\ \pi(\na \mid \ns)}} \left[\frac{C^\pi(\ns, \na)}{1 - C^\pi(\ns, \na)} \right]. \label{eq:bellman}
\end{align}
\end{lemma}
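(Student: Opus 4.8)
The plan is to first eliminate the classifier entirely and reduce the claim to a Bellman-style recursion on the future-success probability. By Eq.~\ref{eq:posterior} (which itself follows from applying Bayes' rule to the Bayes-optimal form in Eq.~\ref{eq:bayes-opt}, since $\tfrac{C^\pi}{1-C^\pi} = \tfrac{p^\pi(\cs,\ca\mid\fe=1)\,p(\fe=1)}{p(\cs,\ca)} = p^\pi(\fe=1\mid\cs,\ca)$), the odds ratio appearing on both sides of Eq.~\ref{eq:bellman} equals $p^\pi(\fe=1\mid\cs,\ca)$. Hence the lemma is equivalent to the identity
\[
p^\pi(\fe=1\mid\cs,\ca) = (1-\gamma)\,p(\e=1\mid\cs) + \gamma\,\E_{p(\ns\mid\cs,\ca)\,\pi(\na\mid\ns)}\!\left[p^\pi(\fe=1\mid\ns,\na)\right],
\]
which no longer references $C^\pi$ at all and can be proven purely from the definitions in Eq.~\ref{eq:future-prob} and Eq.~(1).

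The next step is to expand the left-hand side. Combining Eq.~\ref{eq:future-prob} with the definition of the discounted future state distribution gives the infinite discounted sum
\[
p^\pi(\fe=1\mid\cs,\ca) = (1-\gamma)\sum_{\Delta=0}^{\infty}\gamma^\Delta\,\E_{p^\pi(\mathbf{s_{t+\Delta}}\mid\cs,\ca)}\!\left[p(\e=1\mid\mathbf{s_{t+\Delta}})\right].
\]
I would then peel off the $\Delta=0$ term. Conditioning on $\cs$ pins the state at time $t$ to $\cs$ itself, so that term is exactly $(1-\gamma)\,p(\e=1\mid\cs)$, which already matches the first term of the target recursion.

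The final and most delicate step is to show that the remaining tail $(1-\gamma)\sum_{\Delta\ge 1}\gamma^\Delta\,\E_{p^\pi(\mathbf{s_{t+\Delta}}\mid\cs,\ca)}[p(\e=1\mid\mathbf{s_{t+\Delta}})]$ equals the right-hand expectation. After reindexing $\Delta'=\Delta-1$ and factoring out one power of $\gamma$, I would invoke the Markov property to decompose each $\Delta$-step distribution through the first transition: by the law of total probability and time-homogeneity of the dynamics and policy, $p^\pi(\mathbf{s_{t+\Delta}}=\fs\mid\cs,\ca) = \E_{p(\ns\mid\cs,\ca)\,\pi(\na\mid\ns)}[p^\pi(\mathbf{s_{t+\Delta}}=\fs\mid\ns,\na)]$, where from $(\ns,\na)$ the target state lies $\Delta-1$ steps ahead. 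Substituting this and recollecting the discounted sum inside the expectation reconstitutes $p^\pi(\fe=1\mid\ns,\na)$, yielding precisely $\gamma\,\E_{p(\ns\mid\cs,\ca)\,\pi(\na\mid\ns)}[p^\pi(\fe=1\mid\ns,\na)]$.

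I expect the main obstacle to be the index bookkeeping in this last step: one must track the horizon shift carefully and lean explicitly on stationarity so that the $(\Delta-1)$-step-ahead distribution from $(\ns,\na)$ is the same object (up to time relabeling) as the one appearing in the definition of $p^\pi(\fe=1\mid\ns,\na)$. Once the reindexing and the Markov decomposition are aligned, interchanging the (absolutely convergent, since $\gamma<1$ and the summands are probabilities) sum and expectation and recollecting terms is routine.
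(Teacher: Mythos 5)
Your proof is correct and follows essentially the same route as the paper's: both reduce the claim, via the odds identity in Eq.~\ref{eq:posterior}, to the recursion $p^\pi(\fe=1\mid\cs,\ca) = (1-\gamma)p(\e=1\mid\cs) + \gamma\,\E[p^\pi(\fe=1\mid\ns,\na)]$ (the paper's Eq.~\ref{eq:td}). The only difference is that the paper simply asserts that recursion as an observation, whereas you supply its derivation by peeling off the $\Delta=0$ term of the discounted sum and applying the Markov decomposition to the tail --- a welcome addition, and correctly executed.
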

The proof combines the definition of the Bayes-optimal classifier with the assumption from Eq.~\ref{eq:assumption}. This Bellman equation is analogous to the standard Bellman equation for Q-learning, where the reward function is replaced by $(1 - \gamma)p(\e = 1 \mid \cs)$ and the Q function is parametrized as $Q_\theta^\pi(\cs, \ca) = \frac{C_\theta^\pi(\cs, \ca)}{1 - C_\theta^\pi(\cs, \ca)}$. While we do not know how to compute this reward function, the update rule for RCE is equivalent to doing value iteration using that reward function and that parametrization of the Q-function:
\begin{lemma} \label{lemma:equivalence}
In the tabular setting, the \textbf{expected} updates for RCE are equivalent to doing value iteration with the reward function $r(\cs) = (1 - \gamma) p(\e = 1 \mid \cs)$ and a Q-function parametrized as $Q_\theta^\pi(\cs, \ca) = \frac{C_\theta^\pi(\cs, \ca)}{1 - C_\theta^\pi(\cs, \ca)}$.
\end{lemma}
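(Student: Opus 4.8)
The plan is to exploit the fact that in the tabular setting each classifier value $C_\theta^\pi(\cs, \ca)$ is a free, independently-adjustable parameter, so that the population (expected) RCE objective decouples into a separate weighted binary cross-entropy for every state--action pair. I would treat the bootstrap weight $w$ of Eq.~\ref{eq:w} as a fixed target (the analogue of a target network), so that maximizing the population objective $\gL^\pi$ of Eq.~\ref{eq:final-obj} amounts, entry by entry, to maximizing $\alpha \log C + \beta \log(1 - C)$ over $C \in (0,1)$, where $\alpha$ collects the total ``positive'' weight placed on $(\cs,\ca)$ and $\beta$ the total ``negative'' weight. This scalar objective is strictly concave with unique maximizer $C^\star = \alpha/(\alpha+\beta)$, so the quantity to track is the odds ratio $C^\star/(1-C^\star) = \alpha/\beta$.

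Next I would read off $\alpha$ and $\beta$ from Eq.~\ref{eq:final-obj}. Term $(c)$ gives $\beta = p(\cs,\ca)$. The positive weight $\alpha$ receives a contribution $(1-\gamma)\,p_U(\cs \mid \e = 1)\,\pi(\ca \mid \cs)$ from term $(a)$ and, from term $(b)$, a contribution $\gamma\, p(\cs,\ca)\, \E_{p(\ns \mid \cs, \ca)}[w]$; substituting the definition of $w$ converts the latter into $\gamma\, p(\cs,\ca)\, \E_{p(\ns \mid \cs, \ca)\pi(\na \mid \ns)}[\tfrac{C(\ns,\na)}{1-C(\ns,\na)}]$. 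Dividing by $\beta$ therefore yields
\begin{equation*}
    \frac{C^\star(\cs,\ca)}{1 - C^\star(\cs,\ca)} = (1-\gamma)\frac{p_U(\cs \mid \e = 1)\,\pi(\ca \mid \cs)}{p(\cs,\ca)} + \gamma\, \E_{\substack{p(\ns \mid \cs, \ca)\\ \pi(\na \mid \ns)}}\!\left[\frac{C(\ns,\na)}{1 - C(\ns,\na)}\right].
\end{equation*}

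The crucial step is to identify the first term with the reward. Writing $p(\cs,\ca) = p(\cs)\pi(\ca \mid \cs)$ cancels the policy factor, and invoking the assumption of Eq.~\ref{eq:assumption} (so that $p(\cs) = p_U(\cs)$) together with the Bayes identity of Eq.~\ref{eq:bayes} rewrites $p_U(\cs \mid \e=1)/p_U(\cs)$ as (a constant multiple of) $p(\e = 1 \mid \cs)$, producing the reward term $(1-\gamma)\,p(\e=1 \mid \cs)$. Finally I would reparametrize $Q_\theta^\pi(\cs,\ca) = \frac{C_\theta^\pi(\cs,\ca)}{1 - C_\theta^\pi(\cs,\ca)}$, at which point the displayed identity reads $Q_{\mathrm{new}}(\cs,\ca) = r(\cs) + \gamma\, \E_{p(\ns\mid\cs,\ca)\pi(\na\mid\ns)}[Q_{\mathrm{old}}(\ns,\na)]$ with $r(\cs) = (1-\gamma)p(\e=1\mid\cs)$ --- exactly one application of the value-iteration Bellman backup, which is the claim.

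I expect the main obstacle to be this third step: carefully justifying the passage from the classifier's positive sampling distribution $p_U(\cs \mid \e=1)$ to the reward $p(\e=1\mid\cs)$. This is precisely where the assumption in Eq.~\ref{eq:assumption} is indispensable (without it the marginal $p(\cs)$ need not equal $p_U(\cs)$), and where one must track the global normalization constant $p_U(\e=1)$ arising from Eq.~\ref{eq:bayes}; this constant rescales the reward but, being independent of $(\cs,\ca)$, leaves the induced value-iteration fixed point and optimal policy unchanged. A secondary point worth stating explicitly is \emph{why} the population objective decouples across state--action pairs and why freezing $w$ is the correct notion of ``expected update'': it is exactly this that makes the idealized classifier-fitting step a genuine Bellman operator application rather than a single gradient step.
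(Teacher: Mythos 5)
Your proposal is correct and follows essentially the same route as the paper's proof: both compute the expected classifier update at each $(\cs,\ca)$ as a weighted combination of labels (equivalently, the maximizer $C^\star=\alpha/(\alpha+\beta)$ of the decoupled cross-entropy), pass to the odds ratio $C/(1-C)$ by monotonicity, and recognize the result as the Bellman backup with reward $(1-\gamma)p(\e=1\mid\cs)$. The only difference is that you make explicit the conversion of $p_U(\cs\mid\e=1)/p(\cs)$ into $p(\e=1\mid\cs)$ via Eq.~\ref{eq:assumption} and Bayes' rule, a step the paper's proof of this lemma absorbs into its earlier derivation of Eq.~\ref{eq:final-obj}.
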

This result tells us that RCE is equivalent to maximizing the reward function $(1 - \gamma) p(\e = 1 \mid \cs)$; however, RCE does not require knowing $p(\e = 1 \mid \cs)$, the probability that each state solves the task. Since value iteration converges in the tabular setting, an immediate consequence of Lemma~\ref{lemma:equivalence} is that tabular RCE also converges:
\begin{corollary} \label{lemma:convergence}
RCE converges in the tabular setting.
\end{corollary}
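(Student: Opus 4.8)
The plan is to leverage Lemma~\ref{lemma:equivalence} to reduce the claim entirely to the classical convergence of value iteration, so that the corollary follows with essentially no additional work. Since Lemma~\ref{lemma:equivalence} establishes that, in the tabular setting, the expected RCE updates coincide exactly with value iteration under the reward $r(\cs) = (1-\gamma)\, p(\e = 1 \mid \cs)$ and the reparametrization $Q_\theta^\pi(\cs, \ca) = C_\theta^\pi(\cs,\ca)/(1 - C_\theta^\pi(\cs,\ca))$, it suffices to show that this particular instance of value iteration converges and then to transport that convergence back through the reparametrization to the classifier iterates.

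First I would verify that the induced reward is well-defined and bounded: because $p(\e = 1 \mid \cs) \in [0,1]$, we have $r(\cs) \in [0, 1-\gamma]$, so the associated Q-values lie in a bounded interval and the state--action space is finite. Next I would invoke the standard fact that, for a bounded reward and discount $\gamma < 1$, the Bellman optimality operator $\mathcal{T}$ is a $\gamma$-contraction in the sup-norm $\norm{\cdot}_\infty$; by the Banach fixed-point theorem the iterates $Q_{k+1} = \mathcal{T} Q_k$ converge geometrically to the unique fixed point $Q^\star$, whose policy-evaluation analogue is precisely the fixed-point identity of Lemma~\ref{lemma:bellman} rewritten for $Q^\pi = C^\pi/(1-C^\pi)$. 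Finally I would transport this back: the map $C \mapsto C/(1-C)$ is a continuous, strictly increasing bijection from $(0,1)$ onto $(0,\infty)$ with continuous inverse $Q \mapsto Q/(1+Q)$, so convergence of the $Q$-iterates immediately yields convergence of the classifier iterates $C_\theta^\pi$, and hence of RCE.

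The hard part --- really the only subtlety --- will be handling the reparametrization cleanly rather than the contraction argument itself. I would need to confirm that the classifier iterates remain strictly inside $(0,1)$ so that the bijection and its inverse are well-behaved along the entire trajectory (equivalently, that the corresponding $Q$-iterates stay finite and nonnegative), which follows from the boundedness of $r$ together with the form of the update in Eq.~\ref{eq:final-obj}. I would also be explicit that the statement concerns the \emph{expected} updates, so there are no stochastic-approximation fluctuations to control: the dynamics are deterministic and the Banach contraction argument applies verbatim. A final point worth one sentence is that the interleaved policy-improvement step is exactly the greedy maximization folded into $\mathcal{T}$, so alternating the classifier and policy updates realizes value iteration rather than mere policy evaluation; this is what licenses the appeal to convergence of the full procedure rather than only to $Q^\pi$ for a fixed $\pi$.
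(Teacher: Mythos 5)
Your proposal is correct and takes essentially the same route as the paper: both reduce the corollary to Lemma~\ref{lemma:equivalence} and then invoke the standard convergence of tabular value iteration (the paper simply cites \citet{jaakkola1994convergence} where you spell out the $\gamma$-contraction and Banach fixed-point argument). The extra care you take in transporting convergence back through the monotone bijection $C \mapsto C/(1-C)$ and checking that the iterates stay in $(0,1)$ is a reasonable elaboration of a step the paper leaves implicit, not a different argument.
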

So far we have analyzed the training process for the classifier for a fixed policy. We conclude this section by showing that optimizing the policy w.r.t. the classifier improves the policy's performance.
\begin{lemma} \label{lemma:pi}
Let policy $\pi(\ca \mid \cs)$ and success examples $\gS^*$ be given, and let $C^\pi(\cs, \ca)$ denote the corresponding Bayes-optimal classifier. Define the improved policy as acting greedily w.r.t. $C^\pi$: $\pi'(\ca \mid \cs) = \mathbbm{1}(\mathbf{a} = \argmax_a C^\pi(\cs, \mathbf{a}))$.
Then the improved policy is at least as good as the old policy at solving the task: $p^{\pi'}(\fe = 1) \ge p^{\pi}(\fe = 1)$.
\end{lemma}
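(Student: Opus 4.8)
The plan is to prove this as an instance of the classical \emph{policy improvement theorem}, transported to our setting through the reparametrization already established in Lemma~\ref{lemma:bellman}. First I would record the two structural facts that make the reduction work. By Eq.~\ref{eq:posterior}, the quantity $Q^\pi(\cs, \ca) \triangleq \frac{C^\pi(\cs, \ca)}{1 - C^\pi(\cs, \ca)}$ equals $p^\pi(\fe = 1 \mid \cs, \ca)$, the discounted future-success probability. Since $x \mapsto \frac{x}{1-x}$ is strictly increasing on $[0,1)$, acting greedily with respect to $C^\pi$ coincides exactly with acting greedily with respect to $Q^\pi$, so the improved policy is $\pi'(\ca \mid \cs) = \mathbbm{1}(\mathbf{a} = \argmax_{\mathbf{a}} Q^\pi(\cs, \mathbf{a}))$. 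Lemma~\ref{lemma:bellman} then states precisely that $Q^\pi$ is the state--action value function of $\pi$ for the reward $r(\cs) = (1-\gamma)\,p(\e = 1 \mid \cs)$, i.e. $Q^\pi(\cs, \ca) = (1-\gamma) p(\e = 1 \mid \cs) + \gamma \, \E_{p(\ns \mid \cs, \ca),\, \pi(\na \mid \ns)}[Q^\pi(\ns, \na)]$.

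Next I would introduce the value function $V^\pi(\cs) \triangleq \E_{\pi(\ca \mid \cs)}[Q^\pi(\cs, \ca)] = p^\pi(\fe = 1 \mid \cs)$ and establish the one-step improvement inequality $V^\pi(\cs) \le \E_{\pi'(\ca \mid \cs)}[Q^\pi(\cs, \ca)]$, which holds because averaging $Q^\pi(\cs, \cdot)$ under $\pi$ can be no larger than selecting its maximizer, as $\pi'$ does. I would then expand the right-hand side with the Bellman identity above, obtaining $V^\pi(\cs) \le (1-\gamma)\,p(\e=1\mid\cs) + \gamma\,\E_{\pi'(\ca\mid\cs),\,p(\ns\mid\cs,\ca)}[V^\pi(\ns)]$, and re-apply the one-step inequality to the inner $V^\pi(\ns)$. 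Iterating this substitution $n$ times peels off a partial sum $(1-\gamma)\sum_{k=0}^{n-1} \gamma^k \,\E[\,p(\e=1\mid\cdot)\,]$ taken under the state distribution induced by rolling out $\pi'$, plus a discounted remainder $\gamma^n \E[V^\pi(\cdot)]$. Because $p(\e=1\mid\cdot) \in [0,1]$ and $\gamma < 1$, the remainder vanishes as $n \to \infty$ and the partial sums converge to $p^{\pi'}(\fe = 1 \mid \cs) = V^{\pi'}(\cs)$, yielding the pointwise bound $V^\pi(\cs) \le V^{\pi'}(\cs)$.

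To finish, I would integrate this pointwise inequality against the initial-state distribution: since $p^\pi(\fe = 1) = \E_{p_1(\mathbf{s_1})}[V^\pi(\mathbf{s_1})]$ and likewise for $\pi'$, the bound $V^\pi \le V^{\pi'}$ gives $p^{\pi'}(\fe = 1) \ge p^\pi(\fe = 1)$, as claimed. I expect the main obstacle to be the rigorous justification of the telescoping limit rather than any single algebraic step: one must argue that the repeated substitution produces a monotone sequence of lower bounds that is controlled above, that the residual discounted tail $\gamma^n\E[V^\pi]$ genuinely tends to zero (using boundedness of $Q^\pi \in [0,1]$), and that the limiting partial sum is exactly the discounted occupancy expansion defining $p^{\pi'}(\fe=1\mid\cs)$. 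A secondary subtlety is well-definedness of the greedy $\argmax$ (existence of a maximizer and measurability of $\pi'$), which I would address by assuming finitely many actions or a suitable regularity/compactness condition, consistent with the tabular framing used in Lemmas~\ref{lemma:bellman} and~\ref{lemma:equivalence}.
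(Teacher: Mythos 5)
Your proposal is correct and follows essentially the same route as the paper's own proof: both reduce the claim to the standard policy improvement argument by observing that $\frac{C^\pi(\cs,\ca)}{1-C^\pi(\cs,\ca)} = p^\pi(\fe = 1 \mid \cs, \ca)$ plays the role of a Q-function for the reward $(1-\gamma)p(\e = 1 \mid \cs)$, applying the one-step greedy inequality, and iterating via the recursive identity until the substitution converges to $p^{\pi'}(\fe = 1)$. Your write-up is in fact somewhat more careful than the paper's (which leaves the telescoping limit and the vanishing of the discounted tail implicit behind an ellipsis), but the underlying argument is identical.
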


\subsection{Robust Example-based Control}
\label{sec:robust}

In this section, we derive a principled solution for the case where $p_U(\cs)$ is not known, which will correspond to modifying the objective function for example-based control. However, we will argue that, in some conditions, the method proposed in Sec.~\ref{sec:method} is \emph{already} robust to unknown $p_U(\cs)$, if that method is used with online data collection. The goal of this discussion is to provide a theoretical relationship between our method and a robust version of example-based control that makes fewer assumptions about $p_U(\cs)$. This discussion will also clarify how changing assumptions on the user's capabilities can change the optimal~policy.

When introducing example-based control in Sec.~\ref{sec:problem}, we emphasized that we \emph{must} make an assumption to make the example-based control problem well defined.
The exact probability that a success example solves the task depends on how often the user visited that state,
which the agent does not know. Therefore, there are many valid hypotheses for how likely each state is to solve the task. We can express the set of valid hypotheses using Bayes' Rule:
\begin{equation*}
\gP_{\e \mid \cs} \triangleq \bigg\{\hat{p}(\e = 1\mid \cs) = \frac{p_U(\cs \mid \e = 1) p(\e = 1)}{p_U(\cs)} \bigg \}.
\end{equation*}

Previously (Sec.~\ref{sec:method}), we resolved this ambiguity by assuming that $p_U(\cs)$ was equal to the distribution over states in our dataset of transitions.
As discussed in Sec.~\ref{sec:problem}, many problem settings violate this assumption, prompting us to consider the more stringent setting with no prior information about $p_U(\cs)$ (e.g., no prior knowledge about the user's capabilities).
To address this settting, we will assume the \emph{worst} possible choice of $p_U(\cs)$. This approach will make the agent robust to imperfect knowledge of the user's abilities and to mislabeled success examples.
Formally, we define the \emph{robust example-based control} problem as
\footnotesize \begin{align}
    \max_\pi \min_{\hat{p}(\e \mid \cs) \in \gP_{\e \mid \cs}} \E_{p^\pi(\fs)}[\hat{p}(\fe = 1 \mid \fs)] = \max_\pi \min_{p_U(\cs)} \E_{p^\pi(\fs)} \left[\frac{p_U(\cs \mid \e = 1)}{p_U(\cs)}p(\e = 1) \right]. \label{eq:robust-obj} 
\end{align} \normalsize
This objective can be understood as having the adversary assign a weight of $1 / p_U(\cs)$ to each success example.
The optimal adversary will assign lower weights to success examples that the policy frequently visits and higher weights to less-visited success examples.
Intuitively, the optimal policy should try to reach many of the success examples, not just the ones that are easiest to reach.
Thus, such a policy will continue to succeed even if certain success examples are removed, or are later discovered to have been mislabeled.
Surprisingly, solving this two-player game corresponds to minimizing an $f$-divergence:
\begin{lemma} \label{lemma:robust}
Define $H^2[p(\mathbf{x}), q(\mathbf{x})] = \int (\sqrt{p(\mathbf{x})} - \sqrt{q(\mathbf{x})})^2 d\mathbf{x}$ as the squared Hellinger distance, an $f$-divergence. Robust example-based control (Eq.~\ref{eq:robust-obj}) is equivalent to minimizing the squared Hellinger distance between policy's discounted state occupancy measure and the \textbf{conditional} distribution $p(\cs \mid \e = 1)$:
\begin{align*}
\min_{\hat{p}(\e \mid \cs) \in \gP_{\e \mid \cs}} \hspace{-0.8em} p^{\pi, \hat{p}}(\fe) = 1 \!-\! \frac{1}{2} H^2[p(\cs | \e = 1) , p^\pi(\fs = \cs)].
\end{align*}
\end{lemma}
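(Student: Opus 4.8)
The plan is to fix the policy $\pi$, solve the inner adversarial minimization over $p_U$ in closed form, and then recognize the resulting optimal value as a Hellinger (Bhattacharyya) affinity, which converts directly into the squared Hellinger distance. Write $d(\cs) \triangleq p^\pi(\fs = \cs)$ for the discounted state occupancy and $q(\cs) \triangleq p(\cs \mid \e = 1)$ for the success-example distribution; both are normalized densities. Using the right-hand form of Eq.~\ref{eq:robust-obj}, and noting that the success examples fix $q$ and $p(\e=1)$, the inner problem for fixed $\pi$ is $\min_{p_U} \; p(\e=1)\int \frac{d(\cs)\,q(\cs)}{p_U(\cs)}\,d\cs$ over probability densities $p_U$ (so $\int p_U\,d\cs = 1$, $p_U \ge 0$). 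The crucial structural fact is that $p_U \mapsto \int dq/p_U$ is a convex functional, so any stationary point of the constrained problem is a global minimizer.

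First I would solve this constrained variational problem. The cleanest route avoids calculus of variations and uses Cauchy--Schwarz directly: $\big(\int \sqrt{dq}\,d\cs\big)^2 = \big(\int \sqrt{\tfrac{dq}{p_U}}\,\sqrt{p_U}\,d\cs\big)^2 \le \big(\int \tfrac{dq}{p_U}\,d\cs\big)\big(\int p_U\,d\cs\big) = \int \tfrac{dq}{p_U}\,d\cs$, with equality exactly when $p_U \propto \sqrt{dq}$. Equivalently, a Lagrange multiplier $\lambda$ for the normalization gives the stationarity condition $-d(\cs)q(\cs)/p_U(\cs)^2 + \lambda = 0$, so the worst-case user distribution is $p_U^\star(\cs) = \sqrt{d(\cs)q(\cs)}\big/\!\int\!\sqrt{dq}$. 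Either way, the optimal inner value is controlled by the Bhattacharyya affinity $\int \sqrt{d(\cs)\,q(\cs)}\,d\cs$: the optimal adversary assigns high density precisely where the policy's occupancy and the success distribution overlap, matching the intuition in the text that the worst case down-weights frequently visited successes.

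It then remains to turn this affinity into the stated $f$-divergence. Expanding the definition and using normalization, $H^2[q,d] = \int(\sqrt{q}-\sqrt{d})^2\,d\cs = \int q\,d\cs + \int d\,d\cs - 2\int\sqrt{qd}\,d\cs = 2 - 2\int\sqrt{qd}\,d\cs$, hence $\int\sqrt{qd}\,d\cs = 1 - \tfrac12 H^2[q,d]$. Substituting $p_U^\star$ back into the objective and simplifying therefore expresses the inner minimum in terms of $1 - \tfrac12 H^2[\,p(\cs\mid\e=1),\,p^\pi(\fs=\cs)\,]$. Because $p(\e=1)$ is a $\pi$-independent constant and $x \mapsto x^2$ is monotone on $[0,\infty)$, the outer $\max_\pi$ is then exactly a minimization of the squared Hellinger distance between $p^\pi(\fs)$ and $p(\cs\mid\e=1)$, which is the claimed equivalence; $H^2$ being an $f$-divergence is immediate from its definition.

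The main obstacle is the inner optimization and its bookkeeping, not the final algebra. Two points need care. First, the minimization is infinite-dimensional, so I must justify global optimality of the stationary point; the Cauchy--Schwarz argument above does this rigorously with an explicitly characterized equality case, sidestepping differentiability concerns. Second, the ambiguity set $\gP_{\e\mid\cs}$ carries a feasibility constraint beyond normalization: $\hat p(\e=1\mid\cs) = q(\cs)p(\e=1)/p_U(\cs) \le 1$ forces $p_U \ge p(\e=1)\,q$ pointwise, i.e.\ $p_U$ must be a genuine mixture having $q$ as its $\e=1$ component. I would check that this floor is inactive at $p_U^\star$ in the regime of interest (or argue that where it binds it does not alter the affinity characterization), and track the remaining constant so that the objective collapses to the displayed form. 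Reconciling this feasibility constraint and the normalization constant is the delicate step; once the affinity $\int\sqrt{dq}$ is isolated, the identification with $1 - \tfrac12 H^2$ follows immediately.
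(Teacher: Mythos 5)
Your proposal is correct and follows the same overall structure as the paper's proof: fix $\pi$, solve the inner minimization over $p_U(\cs)$ in closed form, recognize the optimal value as (a constant times the square of) the Bhattacharyya affinity $\int \sqrt{p^\pi(\fs = \cs)\, p_U(\cs \mid \e = 1)}\, d\cs$, and convert that affinity into $1 - \tfrac{1}{2}H^2$ using normalization. The one substantive difference is how you certify the inner minimizer: the paper uses calculus of variations with a Lagrange multiplier plus a pointwise second-derivative check, which establishes stationarity and local convexity, whereas your Cauchy--Schwarz bound $\bigl(\int \sqrt{dq}\,d\cs\bigr)^2 \le \bigl(\int \tfrac{dq}{p_U}\,d\cs\bigr)\bigl(\int p_U\, d\cs\bigr)$ with its explicit equality case $p_U \propto \sqrt{dq}$ gives a genuine global lower bound for this infinite-dimensional problem, which is cleaner and arguably more rigorous. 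You also flag a real gap that the paper passes over silently: membership in $\gP_{\e \mid \cs}$ requires $\hat{p}(\e = 1 \mid \cs) \le 1$, i.e.\ $p_U(\cs) \ge p(\e = 1)\, q(\cs)$ pointwise, and the unconstrained optimizer $p_U^\star \propto \sqrt{dq}$ need not satisfy this in general (it does when $p(\e = 1)$ is sufficiently small); making that check explicit would strengthen both your argument and the paper's. Finally, you and the paper both interpret the displayed equality in the lemma loosely: the inner minimum actually evaluates to $p(\e = 1)\bigl(\int \sqrt{dq}\,d\cs\bigr)^2$, and the stated identity with $1 - \tfrac{1}{2}H^2$ holds only up to the monotone reparametrization you both invoke when passing to the outer maximization over $\pi$.
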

\vspace{-1em}
The main idea of the proof (found in Appendix~\ref{appendix:robust}) is to compute the worst-case distribution $p_U(\cs)$ using the calculus of variations.
Preliminary experiments (Fig.~\ref{fig:robust} in Appendix~\ref{appendix:robust}) show that a version of RCE with online data collection finds policies that perform well on the robust example-based control objective (Eq.~\ref{eq:robust-obj}).
In fact, under somewhat stronger assumptions, we can show that the solution of robust example-based control is a fixed point of \emph{iterated} RCE (see Appendix~\ref{appendix:iterated}).
Therefore, in our experiments, we use RCE with online data~collection.

\vspace{-0.2em}
\section{Experiments}
\vspace{-0.2em}
\label{sec:experiments}

Our experiments study how effectively RCE solves example-based control tasks, especially in comparison to prior methods that learn an explicit reward function.
Both RCE and the prior methods receive only the success examples as supervision; no method has access to expert trajectories of reward functions. Additional experiments in Sec.~\ref{sec:experiments-image} study whether RCE can solve tasks using image observations. These experiments test whether RCE can solve tasks in new environments that are different from those where the success examples were collected, and test whether RCE learns policies that learn a general notion of success rather than just memorizing the success examples.
We include videos of learned policies online\footnote{
\iffinal
    \url{https://ben-eysenbach.github.io/rce}
\else
    \url{https://rce-anonymous.github.io/}
\fi
} and include implementation details, hyperparameters, ablation experiments, and a list of failed experiments in the Appendix.

We compare RCE against prior methods that infer a reward function from the success examples and then apply an off-the-shelf RL algorithm; some baselines iterate between these two steps.
AIRL~\citep{fu2017learning} is a popular adversarial imitation learning method. VICE~\citep{fu2018variational} is the same algorithm as AIRL, but intended to be applied to success examples rather than full demonstrations. We will label this method as ``VICE'' in figures, noting that it is the same algorithm as AIRL.
DAC~\citep{kostrikov2018discriminator} is a more recent, off-policy variant of AIRL. We also compared against two recent methods that learn rewards from \emph{demonstrations}:
ORIL~\citep{zolna2020offline} and PURL~\citep{xu2019positive}.
Following prior work~\citep{konyushkova2020semi}, we also compare against ``frozen'' variants of some baselines that first train the parameters of the reward function and then apply RL to that reward function without updating the parameters of the reward function again.
Our method differs from these baselines in that we do not learn a reward function from the success examples and then apply RL, but rather learn a policy directly from the success examples.
Lastly, we compare against SQIL~\citep{reddy2020sqil}, an imitation learning method that assigns a reward of $+1$ to states from demonstrations and $0$ to all other states. SQIL does not learn a separate reward function and structurally resembles our method, but is derived from different principles (see Sec.~\ref{sec:related-work}.).

\begin{figure*}[!t]
    \centering
    \vspace{-2.5em}
    \includegraphics[width=\linewidth]{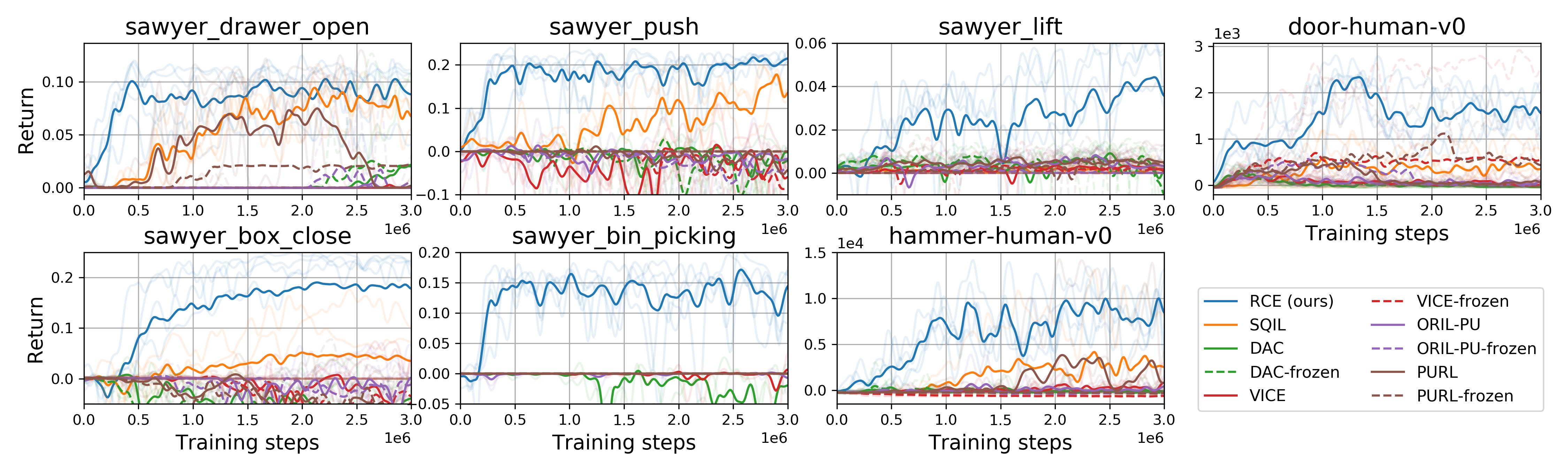}
    \vspace{-1.5em}
    \caption{{\footnotesize \textbf{Recursive Classification of Examples for learning manipulation tasks}: We apply RCE to a range of manipulation tasks, each accompanied with a dataset of success examples. For example, on the \texttt{sawyer\_lift} task, we provide success examples where the object has been lifted above the table. We use the cumulative task return ($\uparrow$ is better) solely for evaluation. Our method (blue line) outperforms prior methods across all tasks.}}
    \vspace{-1.5em}
    \label{fig:manipulation}
\end{figure*}

\vspace{-0.1em}
\subsection{Evaluating RCE for Example-Based Control.}
\vspace{-0.1em}

\begin{wrapfigure}[11]{R}{0.5\textwidth}
    \centering
    \vspace{-0.8em}
    \includegraphics[width=0.24\linewidth]{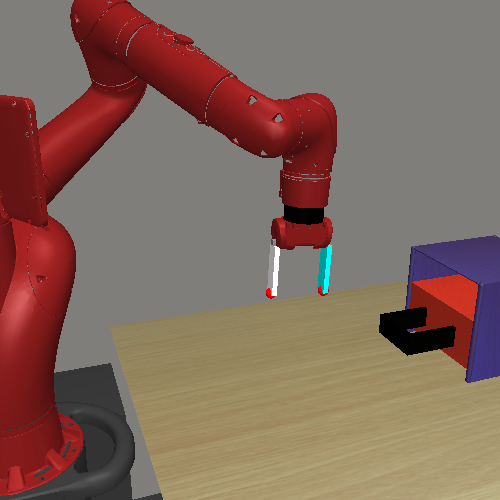}
    \includegraphics[width=0.24\linewidth]{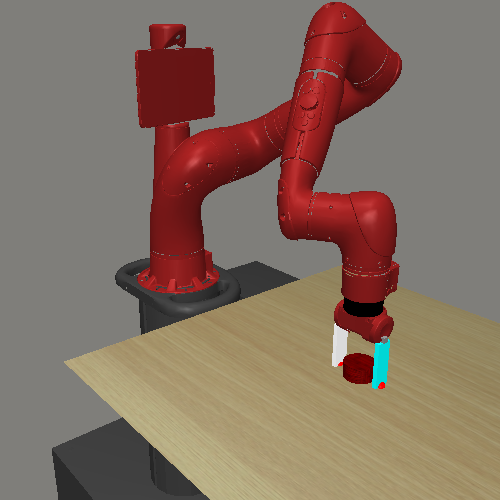}
    \includegraphics[width=0.24\linewidth]{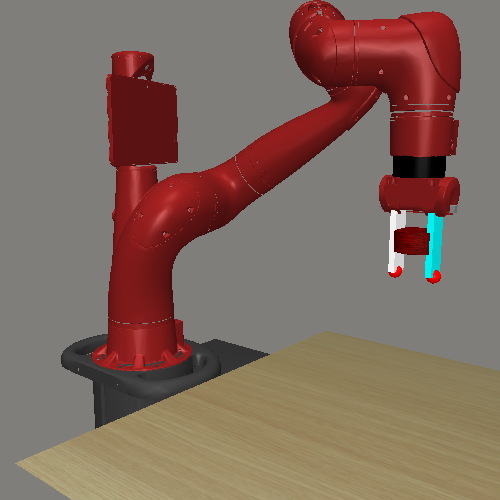}
    \includegraphics[width=0.24\linewidth]{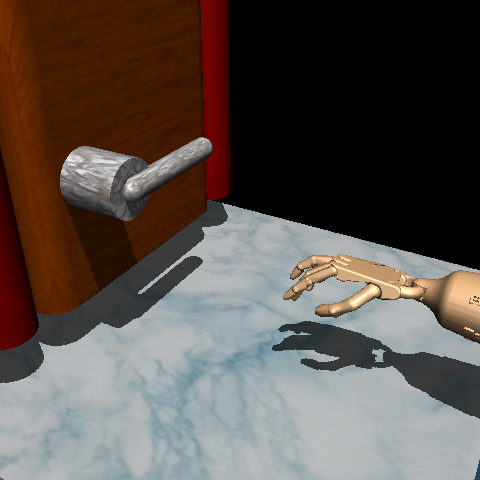}
    \includegraphics[width=0.24\linewidth]{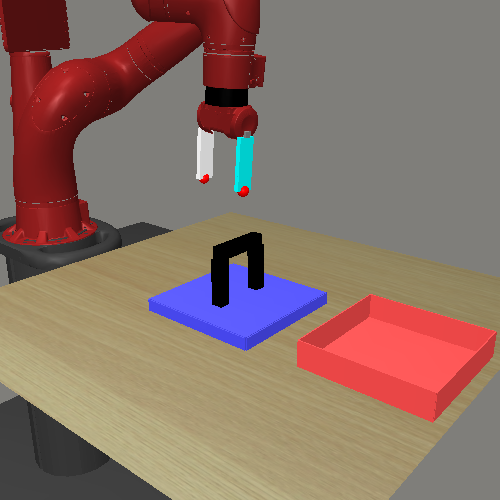}
    \includegraphics[width=0.24\linewidth]{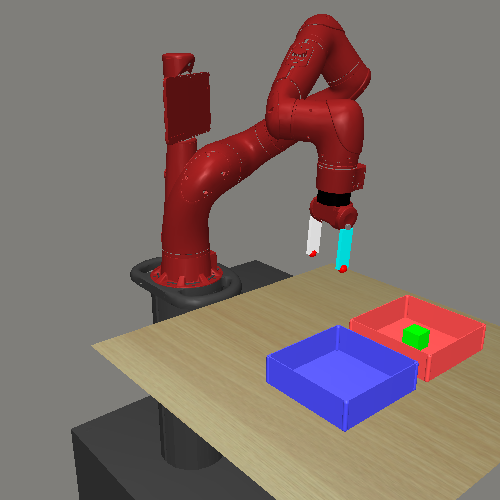}
    \includegraphics[width=0.24\linewidth]{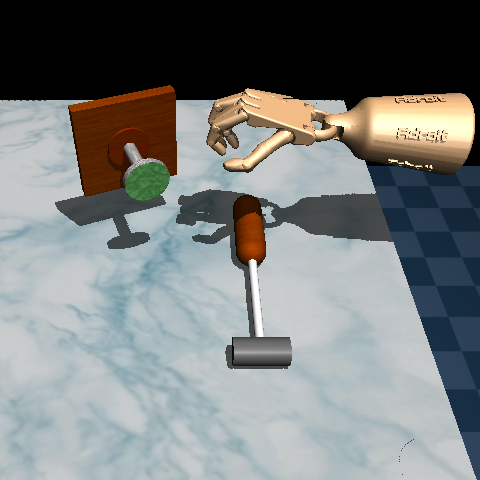}
    \vspace{-0.5em}
    \caption{{\footnotesize \textbf{Manipulation Environments}}}
    \label{fig:environments}
    \vspace{-1.em}
\end{wrapfigure}

We evaluate each method on five Sawyer manipulation tasks from Meta-World~\citep{yu2020meta} and two manipulation tasks from~\citet{rajeswaran18learning}. Fig.~\ref{fig:environments} illustrates these tasks. On each task, we provide the agent with 200 successful outcomes to define the task. For example, on the \texttt{open\_drawer} task, these success examples show an opened drawer.
As another example, on the \texttt{sawyer\_push} task, success examples not only have the end effector touching the puck, but (more importantly) the puck position is different.
We emphasize that these success examples only reflect the final state where the task is solved and are not full expert trajectories. This setting is important in practical use-cases: it is often easier for humans to arrange the workspace into a successful configuration than it is to collect an entire demonstration. See Appendix~\ref{appendix:environments} for details on how success examples were generated for each task. 
While these tasks come with existing user-defined reward functions, these rewards are not provided to any of the methods in our experiments and are used solely for evaluation ($\uparrow$ is better). We emphasize that this problem setting is exceedingly challenging: the agent provided only with examples of success states (e.g., an observation where an object has been placed in the correct location). Most prior methods that tackle similar tasks employ hand-designed reward functions or distance functions, full demonstrations, or carefully-constructed initial state distributions.

The results in Fig.~\ref{fig:manipulation} show that RCE significantly outperforms prior methods across all tasks.
The transparent lines indicate one random seed, and the darker lines are the average across random seeds. RCE solves many tasks, such as bin picking and hammering, that \emph{none} of the baselines make \emph{any} progress towards solving. The most competitive baseline, SQIL, only makes progress on the easiest two tasks; even on those tasks, SQIL learns more slowly than RCE and achieves lower asymptotic return.
To check that all baselines are implemented correctly, we confirm that all can solve a very simple reaching task described in the next section.

\subsection{Example-Based Control from Images}
\label{sec:experiments-image}

Our second set of experiments studies whether RCE can learn image-based tasks and assesses the generalization capabilities of our method. We designed three \emph{image-based} manipulation tasks. The \texttt{reach\_random\_position} task entails reaching a red puck, whose position is randomized in each episode. The \texttt{reach\_random\_size} task entails reaching a red object, but the actual shape of that object varies from one episode to the next. Since the agent cannot change the size of the object and the size is randomized from one episode to the next, it is impossible to reach any of the previously-observed success examples. To solve this task, the agent must learn a notion of success that is more general than reaching a fixed goal state. The third task, \texttt{sawyer\_clear\_image}, entails clearing an object off the table, and is mechanically more challenging than the reaching tasks.

\begin{figure*}[!t]
    \centering
    \vspace{-2em}
    \begin{subfigure}[c]{0.33\textwidth}
    \centering
    \caption*{{\scriptsize \texttt{sawyer\_reach\_random\_position\_image}}}
    \vspace{-0.5em}
    \fcolorbox{cyan}{cyan}{\includegraphics[width=0.2\linewidth]{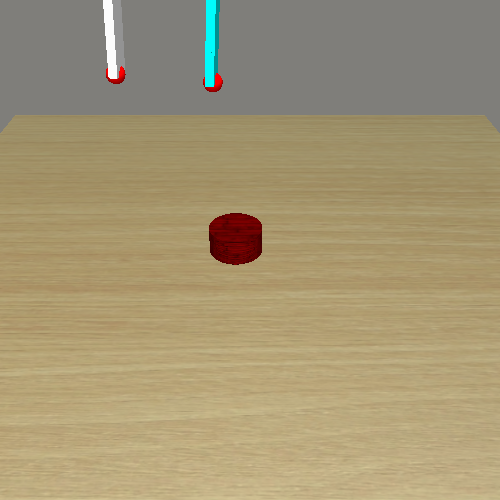}}
    \vrule \hspace{-0.05em}
    \fcolorbox{green}{green}{\includegraphics[width=0.2\linewidth]{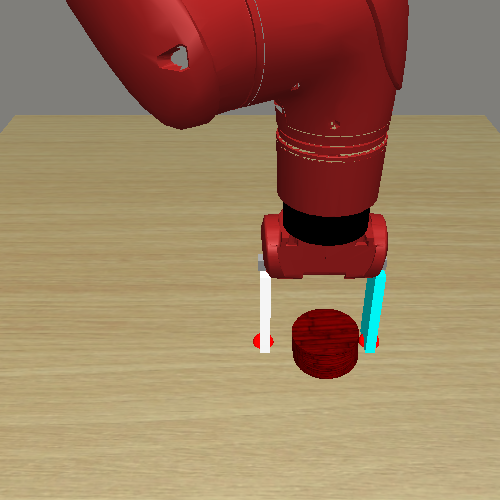}} \fcolorbox{green}{green}{\includegraphics[width=0.2\linewidth]{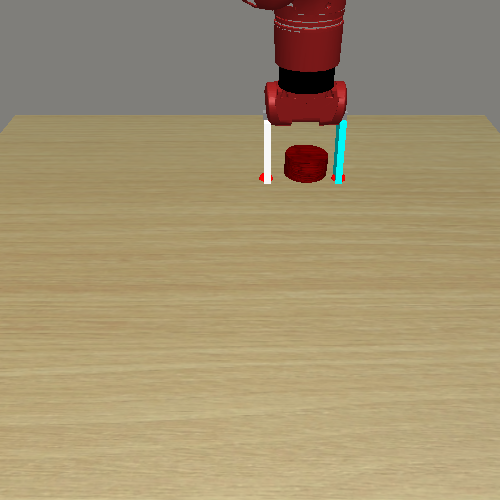}}
    \fcolorbox{green}{green}{\includegraphics[width=0.2\linewidth]{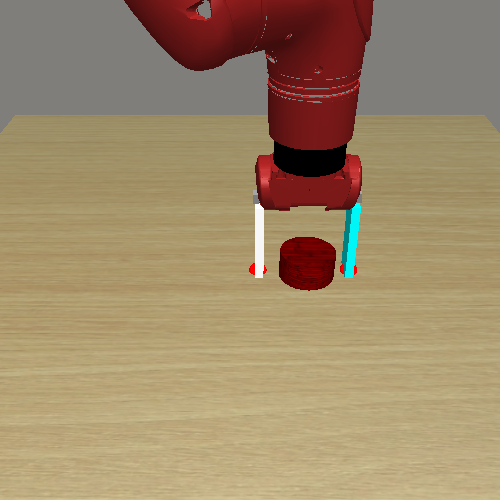}}
    \end{subfigure}%
    ~
    \begin{subfigure}[c]{0.33\textwidth}
    \centering
    \caption*{{\scriptsize \texttt{sawyer\_reach\_random\_size\_image}}}
    \vspace{-0.5em}
    \fcolorbox{cyan}{cyan}{\includegraphics[width=0.2\linewidth]{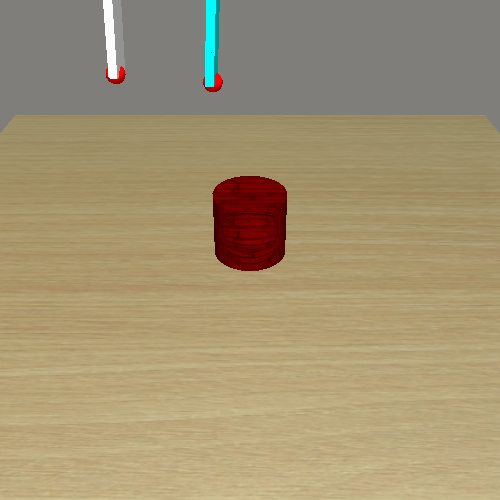}}
    \vrule \hspace{-0.05em}
    \fcolorbox{green}{green}{\includegraphics[width=0.2\linewidth]{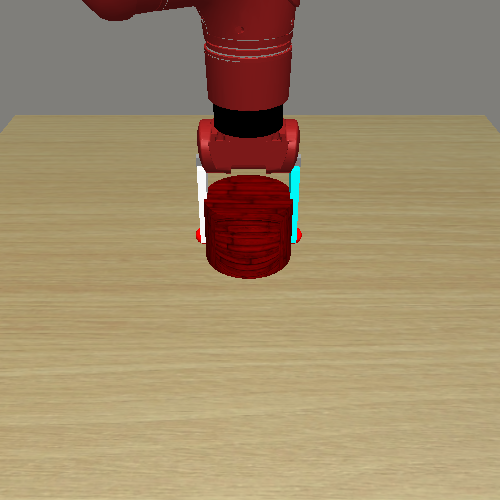}} \fcolorbox{green}{green}{\includegraphics[width=0.2\linewidth]{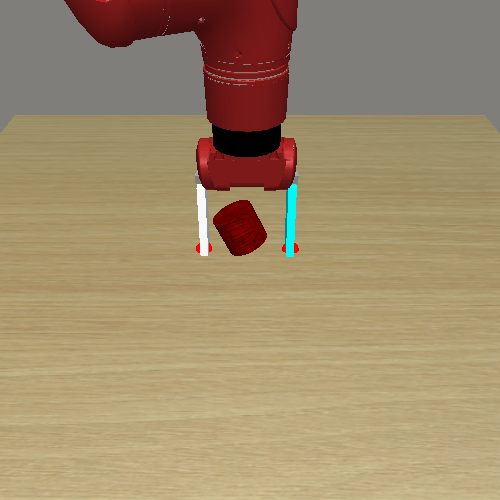}}
    \fcolorbox{green}{green}{\includegraphics[width=0.2\linewidth]{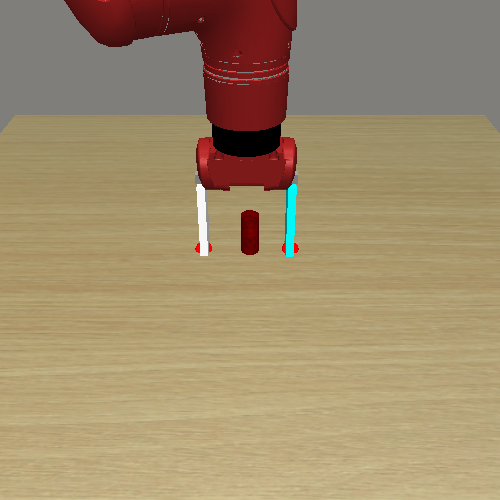}}
    \end{subfigure}%
    ~
    \begin{subfigure}[c]{0.33\textwidth}
    \centering
    \caption*{{\scriptsize \texttt{sawyer\_clear\_image}}}
    \vspace{-0.5em}
    \fcolorbox{cyan}{cyan}{\includegraphics[width=0.2\linewidth]{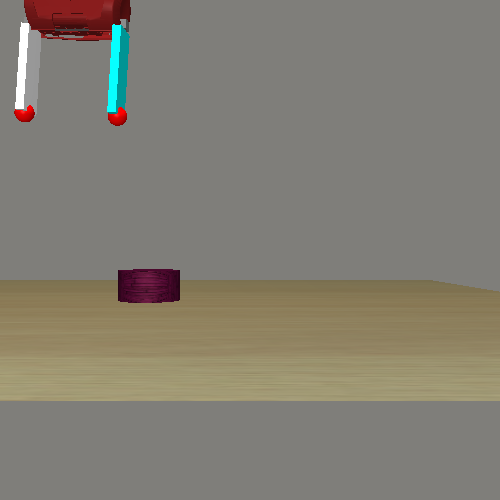}}
    \vrule \hspace{-0.05em}
     \fcolorbox{green}{green}{\includegraphics[width=0.2\linewidth]{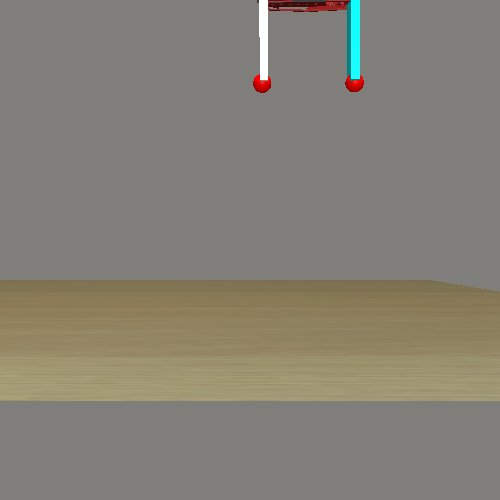}}
     \fcolorbox{green}{green}{\includegraphics[width=0.2\linewidth]{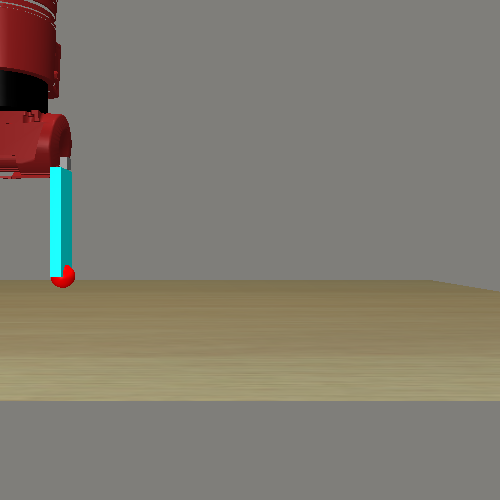}}
     \fcolorbox{green}{green}{\includegraphics[width=0.2\linewidth]{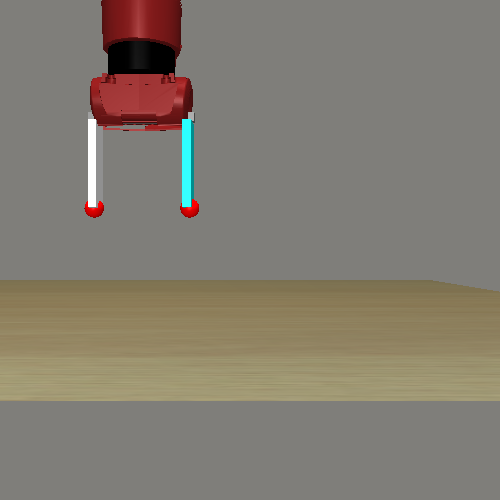}}
    \end{subfigure}
    
    \begin{subfigure}[c]{0.33\textwidth}
    \includegraphics[width=\linewidth]{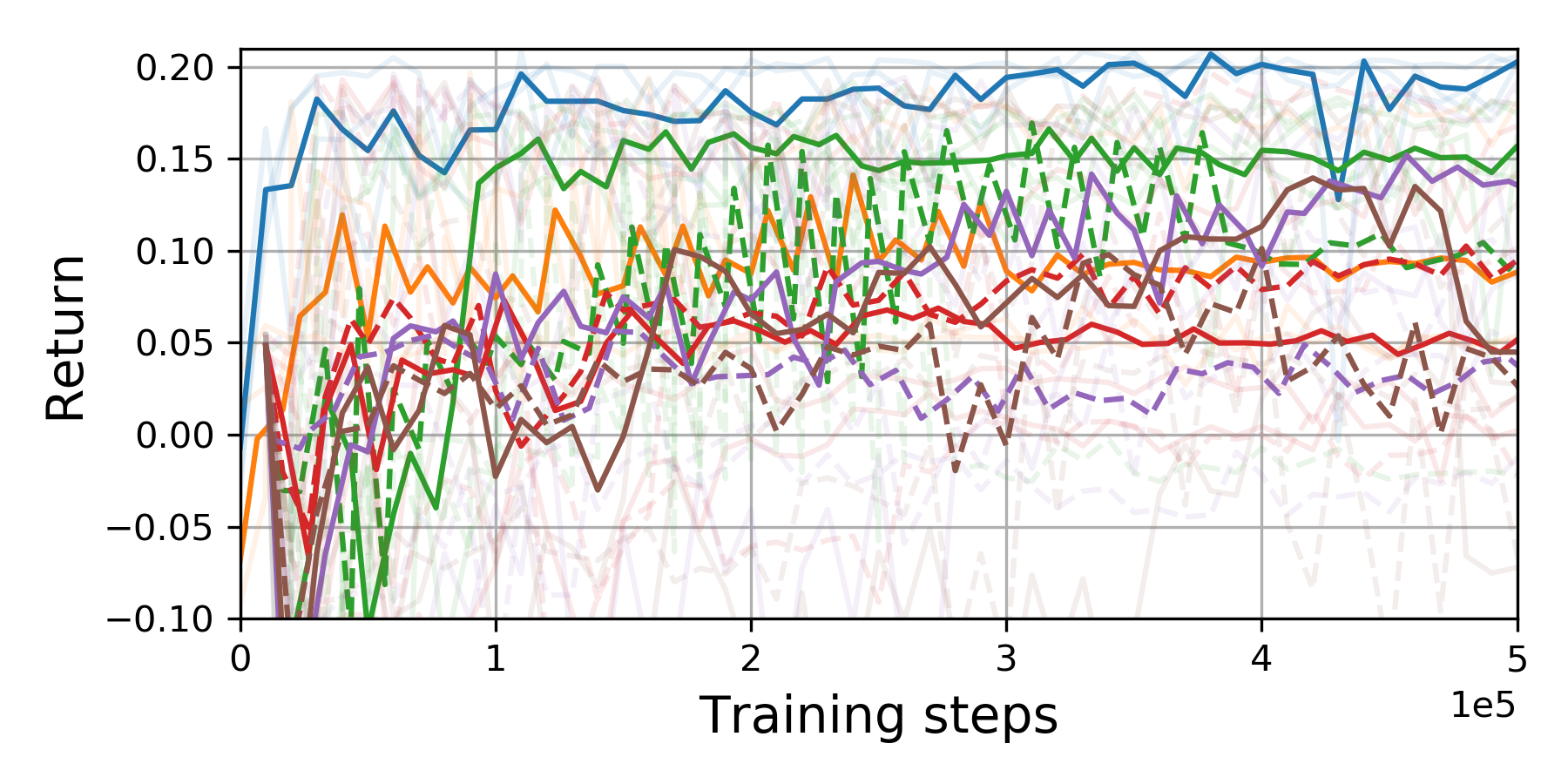}
    \end{subfigure}%
    ~
    \begin{subfigure}[c]{0.33\textwidth}
    \includegraphics[width=\linewidth]{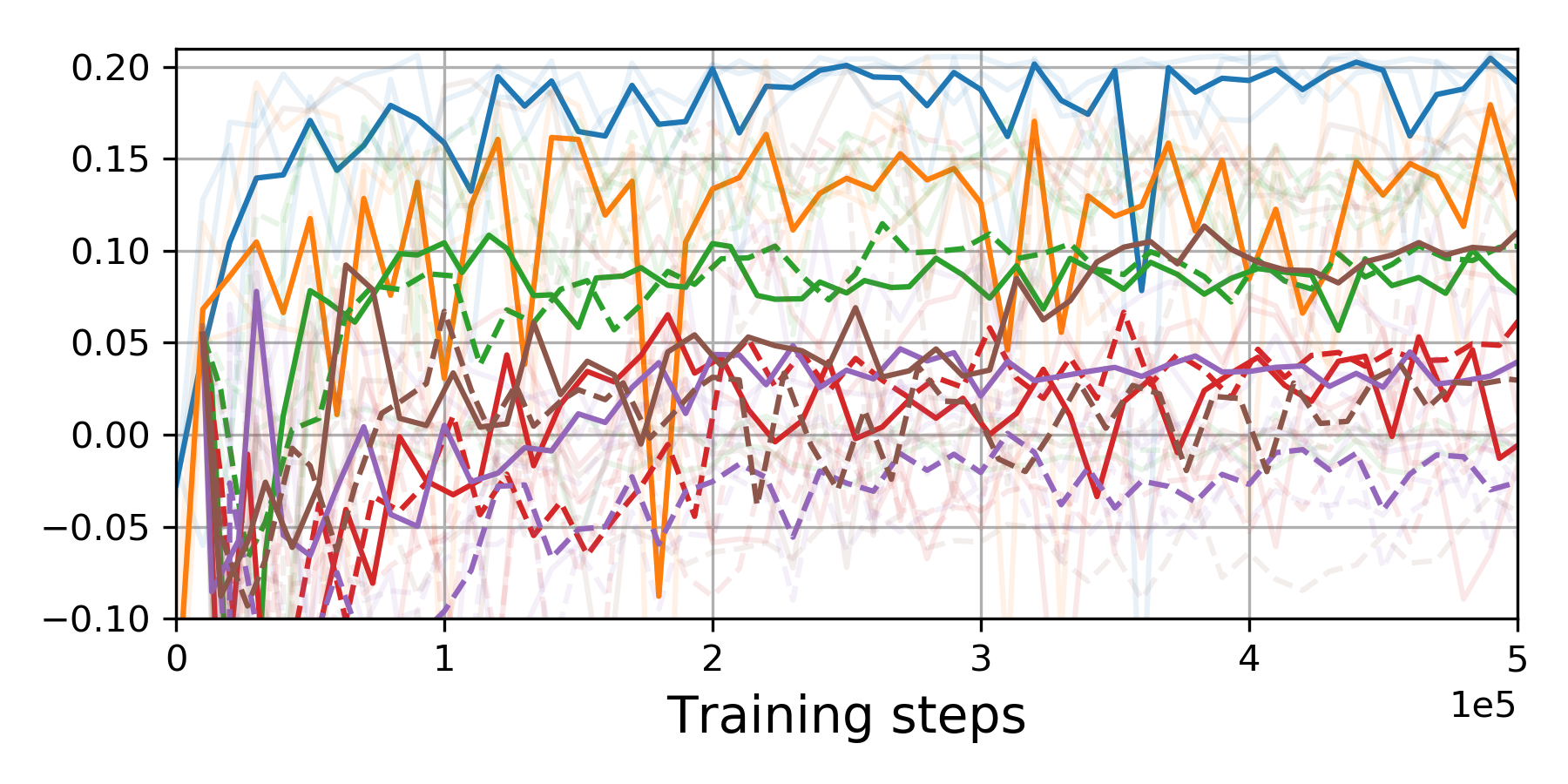}
    \end{subfigure}%
    ~
    \begin{subfigure}[c]{0.33\textwidth}
    \includegraphics[width=\linewidth]{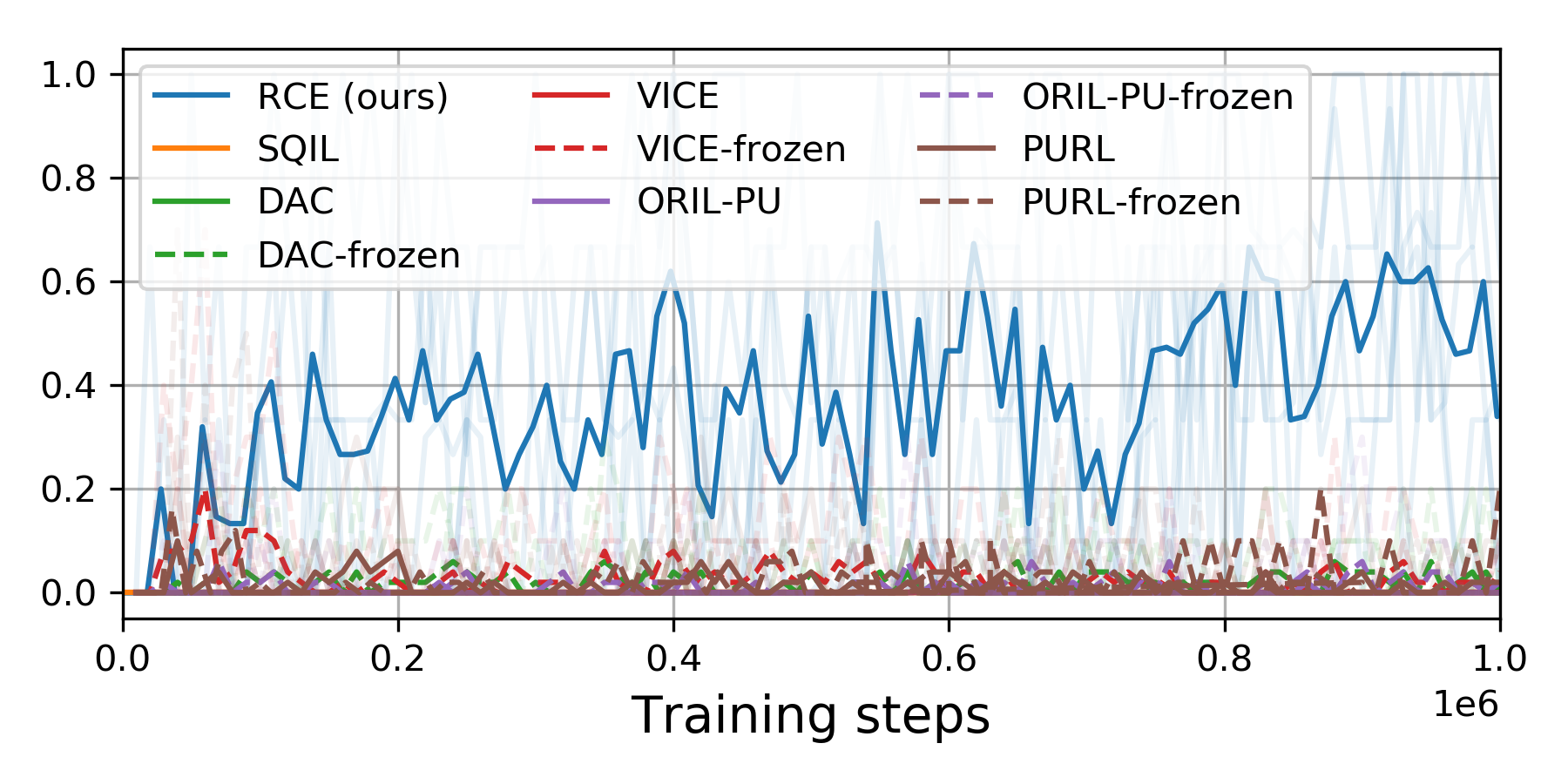}
    \end{subfigure}
    \caption{{\footnotesize \textbf{Example-based control from images}: We evaluate RCE on three manipulation tasks using image-observations. \figtop \, We show examples of the \fcolorbox{cyan}{white}{initial\phantom{p}state} and \fcolorbox{green}{white}{success examples} for each task. \figbottom \; RCE (blue line) outperforms prior methods, especially on the more challenging clearing task. For the \texttt{random\_size} task (\emph{center}), this entails reaching for new objects that have different sizes from any seen in the success examples.}}
    \vspace{-1.5em}
    \label{fig:reach-generalization}
\end{figure*}

Fig.~\ref{fig:reach-generalization} shows results from these image-based experiments, comparing RCE to the same baselines. We observe that RCE has learned to solve both reaching tasks, reaching for the object regardless of the location and size of the object. This task is mechanically easier than the state-based tasks in Fig.~\ref{fig:manipulation}, and all the baselines make some progress on this task, but learn more slowly than our method. The good performance of RCE on the \texttt{reach\_random\_size} task illustrates that RCE can solve tasks in a new environment, where the object size is different from that seen in the success examples. We hypothesize that RCE learns faster than these baselines because it ``cuts out the middleman,'' learning a value function directly from examples rather than indirectly via a separate reward function. To support this hypothesis, we note SQIL, which also avoids learning a reward function, learns faster than other baselines on these tasks. On the more challenging clearing task, only our method makes progress, suggesting that RCE is a more effective algorithm for learning these image-based control tasks. In summary, these results show that RCE outperforms prior methods at solving example-based control tasks from image observations, and highlights that RCE learns a policy that solves tasks in new environments that look different from any of the success examples.

\subsection{Ablation Experiments}
We ran seven additional experiments to study the importance of hyperparameters and design decisions. Appendix~\ref{appendix:ablation} provides full details and figures.
These experiments highlight that RCE is not an imitation learning method: RCE fails when applied to full expert trajectories, which are typically harder to provide than success examples.
Other ablation experiments underscore the importance of using n-step returns and validate the approximation made in Sec.~\ref{sec:method}.

\vspace{-0.5em}
\section{Conclusion}
\vspace{-0.5em}
\label{sec:conclusion}

In this paper, we proposed a data-driven approach to control, where examples of success states are used in place of a reward function. Our method estimates the probability of reaching a success example in the future and optimizes a policy to maximize this probability of success.
Unlike prior imitation learning methods, our approach is end-to-end and does not require learning a reward function. Our method is therefore simpler, with fewer hyperparameters and fewer lines of code to debug. Our analysis rests on a new data-driven Bellman equation, where example success states replace the typical reward function term. We use this Bellman equation to prove convergence of our classifier and policy.
We believe that formulating control problems in terms of data, rather than the reward-centric MDP, better captures the essence of many real-world control problems and suggests a new set of attractive learning algorithms.

\textbf{Limitations and future work.}
One limitation with RCE is that, despite producing effective policies, the classifier's predictions are not well calibrated. This issue resembles the miscalibration in Q-functions observed in prior work~\citep{lillicrap2015continuous, fujimoto2018addressing}.
Second, both RCE and the baselines we compared against all struggled to learn more challenging image-based tasks, such as image-based versions of the tasks shown in Fig.~\ref{fig:environments}. Techniques such as explicit representation learning~\citep{nachum2018near, srinivas2020curl} may be important for scaling example-based control algorithms to high-dimensional tasks.

\iffinal
    {\footnotesize
    \paragraph{Acknowledgements.}
    This work is supported by the Fannie and John Hertz Foundation, NSF (DGE1745016, IIS1763562), ONR (N000141812861), and US Army. We thank Ilya Kostrikov for providing the implementation of DAC. We thank Ksenia Konyushkova, 
    Konrad Zolna, and Justin Fu for discussions about baselines and prior work. We thank Oscar Ramirez for help setting up the image-based experiments, Ryan Julian and anonymous reviewers for feedback on drafts of the paper, and Alex Irpan for help releasing code.}
\fi

{\footnotesize

}

\clearpage
\appendix

\section{Derivation of Recursive Classification of Examples}
\label{appendix:derivation}
In this section we include the full derivation of our method (RCE).
Recall that we want to learn a classifier, $C_\theta^\pi(\cs, \ca)$ to discriminate between ``positive'' state-action pairs sampled from the conditional distribution $p^\pi(\cs, \ca \mid \fe = 1)$ and ``negatives'' sampled from a marginal distribution $p(\cs, \ca)$. We will use different class-specific weights, using a weight of $p(\fe = 1)$ for the ``positives'' and a weight of $1$ for the ``negatives.'' Then, if this classifier is trained perfectly, the Bayes-optimal solution is
\begin{equation*}
C_\theta^\pi(\cs, \ca) = \frac{p^\pi(\cs, \ca \mid \fe = 1) p(\fe = 1)}{p^\pi(\cs, \ca \mid\fe = 1) p(\fe = 1) + p(\cs, \ca)}.
\end{equation*}
The motivation for using these class-specific weights is so that the classifier's predicted probability ratio tells us the probability of solving the task in the future:
\begin{align*}
    \frac{C_\theta^\pi(\cs, \ca)}{1 - C_\theta^\pi(\cs, \ca)} &= p^\pi(\fe = 1 \mid \cs, \ca).
\end{align*}
Importantly, the resulting method will not actually require estimating the weight $p(\fe = 1)$.
We would like to optimize these parameters using maximum likelihood estimation:
\begin{align*}
    \gL^\pi (\theta) \triangleq \; & p(\fe = 1) \; \E_{p(\cs, \ca \mid \fe = 1)} [\log C_\theta^\pi(\cs, \ca)] + \E_{p(\cs, \ca)} [\log (1 - C_\theta^\pi(\cs, \ca))].
\end{align*}
While we can estimate the second expectation using Monte Carlo samples $(\cs, \ca) \sim p(\cs, \ca)$, we cannot directly estimate the first expectation because we cannot sample from \mbox{$(\cs, \ca) \sim p(\cs, \ca \mid \fe = 1)$}. We will address this challenge by deriving a method for training the classifier that resembles the temporal difference update used in value-based RL algorithms, such as Q-learning. We will derive our method from Eq.~\ref{eq:obj-1} using three steps.

The \textbf{first step} is is to factor the first expectation in Eq.~\ref{eq:obj-1}
\begin{equation*}
\text{\footnotesize $p(\cs, \ca \mid \fe = 1) p(\fe = 1) = p^\pi(\fe = 1 \mid \cs, \ca) p(\cs, \ca).$ }
\end{equation*}
Substituting this identity into Eq.~\ref{eq:obj-1}, we obtain
\begin{align}
    \gL^\pi (\theta)
    &= \E_{{\color{orange}p(\cs, \ca)}} [{\color{orange}p^\pi(\fe = 1 \mid \cs, \ca)} \log C_\theta^{(t)}] + \E_{\substack{p(\cs, \ca)}} [\log (1 - C_\theta^{(t)})] \nonumber \\
    &= \E_{p(\cs, \ca)} [p^\pi(\fe = 1 \mid \cs, \ca)\log C_\theta^{(t)} +  \log (1 - C_\theta^{(t)})]. \label{eq:step-1}
\end{align}

The \textbf{second step} is to observe that the term $p^\pi(\fe = 1 \mid \cs, \ca)$ satisfies the following recursive identity:
\begin{align}
    p^\pi(\fe = 1 \mid \cs, \ca) &= (1 - \gamma) p(\e = 1 \mid \cs) + \gamma \E_{\substack{p(\ns \mid \cs, \ca),\\ \pi_\phi(\na \mid \ns)}} \left[p^\pi(\fe = 1 \mid \ns, \na) \right]. \label{eq:td}
\end{align}
We substitute this identity into Eq.~\ref{eq:step-1} and then break the expectation into two terms.
{\footnotesize
\begin{align}
    \gL^\pi (\theta) &= \E_{p(\cs, \ca)}\left[ {\color{orange} \left((1 - \gamma) p(\e = 1 \mid \cs) + \gamma \E_{\substack{p(\ns \mid \cs, \ca),\\ \pi_\phi(\na \mid \ns)}} \left[p^\pi(\fe = 1 \mid \ns, \na) \right] \right)} \log C_\theta^{(t)} + \log (1 - C_\theta^{(t)}) \right] \nonumber \\
    &= (1 - \gamma) \E_{p(\cs, \ca)}\left[ p(\e = 1 \mid \cs) \log C_\theta^{(t)} \right] \nonumber \\
    & \qquad\qquad + \gamma \E_{p(\cs, \ca)} \left[ \E_{\substack{p(\ns \mid \cs, \ca),\\ \pi_\phi(\na \mid \ns)}} \left[p^\pi(\fe = 1 \mid \ns, \na) \right] \log C_\theta^{(t)} + \log (1 - C_\theta^{(t)}) \right]. \label{eq:obj-3}
\end{align}}

The \textbf{third step} is to express the first expectation in Eq.~\ref{eq:obj-3} in terms of samples of success example, $\mathbf{s^*} \sim p(\cs \mid \e = 1)$:
\begin{align*}
     \E_{p(\cs, \ca)}\left[ p(\e = 1 \mid \cs) \log C_\theta^{(t)} \right]
     &= \E_{p(\cs, \ca)}\left[ {\color{orange}\frac{p_U(\cs \mid \e = 1)}{p(\cs)}p_U(\e = 1)} \log C_\theta^{(t)} \right] \\
     &= p_U(\e = 1) \E_{\substack{p(\cs),\\p(\ca \mid \cs)}} \left[ \frac{p_U(\cs \mid \e = 1)}{p(\cs)} \log C_\theta^{(t)} \right] \\
     &= p_U(\e = 1) \E_{\substack{p_U(\cs \mid \e = 1),\\p(\ca \mid \cs)}} \left[ \log C_\theta^{(t)} \right].
\end{align*}
Note that actions are sampled from $p(\ca \mid \cs) = p(\cs, \ca) / p(\cs)$, the average policy used to collect the dataset of transitions. Finally, we substitute this expression back into Eq.~\ref{eq:obj-3}, and use Eq.~\ref{eq:posterior} to estimate the probability of future success at time $t+1$.
\begin{align*}
    \gL^\pi (\theta) &= (1 - \gamma) {\color{orange}p_U(\e = 1) \E_{\substack{p_U(\cs \mid \e = 1),\\p(\ca \mid \cs)}}} \left[ \log C_\theta^{(t)} \right] \\
    & \qquad\qquad + \gamma \E_{p(\cs, \ca)} \left[ \E_{\substack{p(\ns \mid \cs, \ca),\\ \pi_\phi(\na \mid \ns)}} \left[{\color{orange}\frac{C_\theta^{(t+1)}}{1 - C_\theta^{(t+1)}}} \right] \log C_\theta^{(t)} + \log (1 - C_\theta^{(t)}) \right].
\end{align*}
Thus, we have arrived at our final objective (Eq.~\ref{eq:final-obj}).

\section{Proofs}
\label{appendix:proofs}
In this section we include proofs of the theoretical results in the main text.

\paragraph{Proof of Lemma~\ref{lemma:bellman}}
\begin{proof}
To start, we recall that Eq.~\ref{eq:posterior} says that the Bayes-optimal classifier satisfies 
\begin{equation*}
    \frac{C^\pi(\cs, \ca)}{1 - C^\pi(\cs, \ca)} = p^\pi(\fe = 1 \mid \cs, \ca).
\end{equation*}
Substituting this identity into both the LHS and RHS of the recursive definition of $p^\pi(\fe = 1 \mid \cs, \ca)$ (Eq.~\ref{eq:td}), we obtain the desired result:
\begin{equation*}
    \frac{C^\pi(\cs, \ca)}{1 - C^\pi(\cs, \ca)} = (1 - \gamma) p^\pi(\e = 1 \mid \cs) + \gamma \E_{\substack{p(\ns \mid \cs, \ca) \\ \pi(\na \mid \ns)}} \left[\frac{C^\pi(\ns, \na)}{1 - C^\pi(\ns, \na)} \right].
\end{equation*}
\end{proof}

\paragraph{Proof of Lemma~\ref{lemma:convergence}}
\begin{proof}
We will show that RCE is equivalent to using  value iteration using the Bellman equation above, where $(1 - \gamma) p(\nee = 1 \mid \cs, \ca)$ takes the role of the reward function.  For given transition $(\cs, \ca, \ns)$, the corresponding TD target $y$ is the expected value of three terms (Eq.~\ref{eq:final-obj}): with weight $(1 - \gamma) p(\e = 1 \mid \cs)$ it is assigned $y = 1$; with weight $\gamma \E[w]$ it is assigned label $y = 1$; and with weight 1 it is assigned label $y = 0$. Thus, the expected value of the TD target $y$ can be written as follows:
\begin{equation*}
    \E[y \mid \cs, \ca, \ns] = \frac{(1 - \gamma) p(\e = 1 \mid \cs) \cdot 1 + \gamma \E[w] \cdot 1 + 1 \cdot 0}{(1 - \gamma) p(\e = 1 \mid \cs) + \gamma \E[w] + 1}.
\end{equation*}
Thus, the assignment equation can be written as follows:
\begin{equation}
    C^\pi(\cs, \ca) \gets \E[y \mid \cs, \ca, \ns] = \frac{(1 - \gamma) p(\e = 1 \mid \cs) + \gamma \E[w]}{(1 - \gamma) p(\e = 1 \mid \cs) + \gamma \E[w] + 1}.
\end{equation}
While this gives us an assignment equation for $C$, our Bellman equation is expressed in terms of the \emph{ratio} $C / (1 - C)$. Noting that the function $C / (1 - C)$ is strictly monotone increasing, we can write the same assignment equation for the ratio as follows:
\begin{align*}
   \frac{C^\pi(\cs, \ca)}{1 - C^\pi(\cs, \ca)} \gets \frac{\E[y \mid \cs, \ca, \ns]}{1 - \E[y \mid \cs, \ca, \ns]} = (1 - \gamma) p(\e = 1 \mid \cs) + \gamma \E[w].
\end{align*}
Recalling that $w = \frac{C(\ns, \na)}{1 - C(\ns, \na)}$, we conclude that our temporal difference method is equivalent to doing value iteration using the Bellman equation in Eq.~\ref{eq:bellman}. 
\end{proof}

\paragraph{Proof of Corollary~\ref{lemma:convergence}}
\begin{proof}
Lemma~\ref{lemma:equivalence} showed that the update rule for our method is equivalent to value iteration. Value iteration is known to converge in the tabular setting~\citep[Theorem 1]{jaakkola1994convergence}, so our method also converges.
\end{proof}

\paragraph{Proof of Lemma~\ref{lemma:pi}}

The proof of Lemma~\ref{lemma:pi} is nearly identical to the standard policy improvement proof for Q-learning.
\begin{proof}
\begin{align*}
    p^\pi(\fe = 1) 
    &= \E_{p_0(s), \pi(a_0 \mid s_0)}[p^\pi(\fe = 1 \mid s_0, a_0)] \\
    &\le \E_{p_0(s), \pi'(a_0 \mid s_0)}[p^\pi(\fe = 1 \mid s_0, a_0)] \\
    &= \E_{p_0(s), \pi'(a_0 \mid s_0)}[(1 - \gamma) p(e_0 = 1 \mid s_0, a_0) + \gamma \E_{p(s_1 \mid s_0, a_0), \pi(a_1 \mid s_1)} [p^\pi(\fe = 1 \mid s_1, a_1)]] \\
    &\le \E_{p_0(s), \pi'(a_0 \mid s_0)}[(1 - \gamma) p(e_0 = 1 \mid s_0, a_0) + \gamma \E_{p(s_1 \mid s_0, a_0), \pi'(a_1 \mid s_1)} [p^\pi(\fe = 1 \mid s_1, a_1)]] \\
    &= \E_{\substack{p_0(s), \pi'(a_0 \mid s_0) \\ p(s_1 \mid s_0, a_0), \pi'(a_1 \mid s_1)}}[(1 - \gamma) p(e_0 = 1 \mid s_0, a_0) + \gamma p(e_1 = 1 \mid s_1, a_1) \\
    & \hspace{10em} + \gamma^2 \E_{p(s_2 \mid s_1, a_1), \pi(a_2 \mid s_2)} [p^\pi(\fe = 1 \mid s_2, a_2)]] \\
    & \cdots \\
    & \le p^{\pi'}(\fe = 1)
\end{align*}
\end{proof}

\section{Robust Example-Based Control}
\label{appendix:robust}

Our strategy for analyzing robust example-based control will be to first compute an analytic solution to the inner minimization problem. Plugging in the optimal adversary, we will find that we weight the success examples inversely based on how often our policy utilizes each of the potential solution strategies.

\subsection{Proof of Lemma~\ref{lemma:robust}}
\begin{proof}

First, we solve the inner minimization problem. Note that the adversary's choice of an example probability function $\hat{p}(\e \mid \cs)$ is equivalent to a choice of a marginal distribution, $p_U(\cs)$. We can this write the inner minimization problem as follows:
\begin{align*}
    \min_{p_U(\cs)} \int p^\pi(\fs = \cs) \frac{p_U(\cs \mid \e = 1) \cancel{p(\e = 1)}}{p_U(\cs)} d\cs
\end{align*}
As before, we can ignore the constant $p(\e = 1)$. We solve this optimization problem, subject to the constraint that $\int p_U(\cs) d\cs = t$, using calculus of variations. The corresponding Lagrangian is
\begin{equation*}
    \gL(p_U, \lambda) = \int \frac{p^\pi(\fs = \cs) p_U(\cs \mid \e = 1)}{p_U(\cs)} d \cs + \lambda \left (\int p_U(\cs) d\cs - 1 \right).
\end{equation*}
Setting the derivative $\frac{d \gL}{d p_U(\cs)} = 0$, we get
\begin{equation*}
    -\frac{p^\pi(\fs = \cs) p_U(\cs \mid \e = 1)}{p_U^2(\cs)} + \lambda  = 1 \implies p_U(\cs) = \frac{\sqrt{p^\pi(\fs = \cs) p_U(\cs \mid \e = 1)}}{\int \sqrt{p^\pi(\fs = \cs') p_U(\cs' \mid \e = 1)} d \cs'}.
\end{equation*}
Note that $\frac{d^2 \gL}{d p_U(\cs)^2} > 0$, so this stationary point is a minimum.
We can therefore write the worst-case probability function as 
\begin{align*}
    \hat{p}(\e = 1 \mid \cs) &= \frac{p_U(\cs \mid \e = 1)}{p_U(\cs)} p(\e = 1) \\
    &= \sqrt{\frac{p_U(\cs \mid \e = 1)}{p^\pi(\fs = \cs)}} \int \sqrt{p^\pi(\fs = \cs') p_U(\cs' \mid \e = 1)} d \cs' p(\e = 1).
\end{align*}
Intuitively, this says that the worst-case probability function is one where successful states $\cs \sim p_U(\cs \mid \e = 1)$ are downweighted if the current policy visits those states more frequently (i.e., if $p^\pi(\fs = \cs)$ is large).
Substituting this worst-case example probability into Eq.~\ref{eq:robust-obj}, we can write the adversarial objective as follows:
\begin{align*}
    \max_\pi & \int p^\pi(\fs = \cs) \sqrt{\frac{p_U(\cs \mid \e = 1)}{p^\pi(\fs = \cs)}} d 
    \cs \int \sqrt{p^\pi(\fs = \cs') p_U(\cs' \mid \e = 1)} d \cs' p(\e = 1) \\
    &= \left( \int \sqrt{p^\pi(\fs = \cs) p_U(\cs \mid \e = 1)} d \cs\right)^2 p(\e = 1)
\end{align*}
Since $p(\e = 1)$ is assumed to be a constant and $(\cdot)^2$ is a monotone increasing function (for non-negative arguments), we can express this same optimization problem as follows:
\begin{align*}
    \max_\pi \int \sqrt{p^\pi(\fs = \cs) p_U(\cs \mid \e = 1)} d \cs.
\end{align*}
Using a bit of algebra, we can show that this objective is a (scaled and shifted) squared Hellinger distance:
\begin{align*}
    \int & \sqrt{p^\pi(\fs = \cs) p_U(\cs \mid \e = 1)} d \cs \\
    &= 1 - \frac{1}{2}\int \left( \sqrt{p^\pi(\fs = \cs)} - \sqrt{p_U(\cs \mid \e = 1)} \right)^2 d \cs \\
    &= 1 - \frac{1}{2}\int \left(\sqrt{p_U(\cs \mid \e = 1)} \left( \sqrt{\frac{p^\pi(\fs = \cs)}{p_U(\cs \mid \e = 1)}} - 1\right)  \right)^2 d \cs \\
    &= 1 - \frac{1}{2} \E_{p_U(\cs \mid \e = 1)} \left[\left( 1 - \sqrt{\frac{p^\pi(\fs = \cs)}{p_U(\cs \mid \e = 1)}} \right)^2 \right] \\
    &= 1 - \frac{1}{2} H^2(p_U(\cs \mid \e = 1) , p^\pi(\fs = \cs)).
\end{align*}
\end{proof}

\subsection{Robust Example-Based Control and Iterated RCE}
\label{appendix:iterated}

\begin{wrapfigure}{R}{0.5\textwidth}
\vspace{-1.5em}
\begin{subfigure}[c]{0.5\linewidth}
\includegraphics[width=\linewidth]{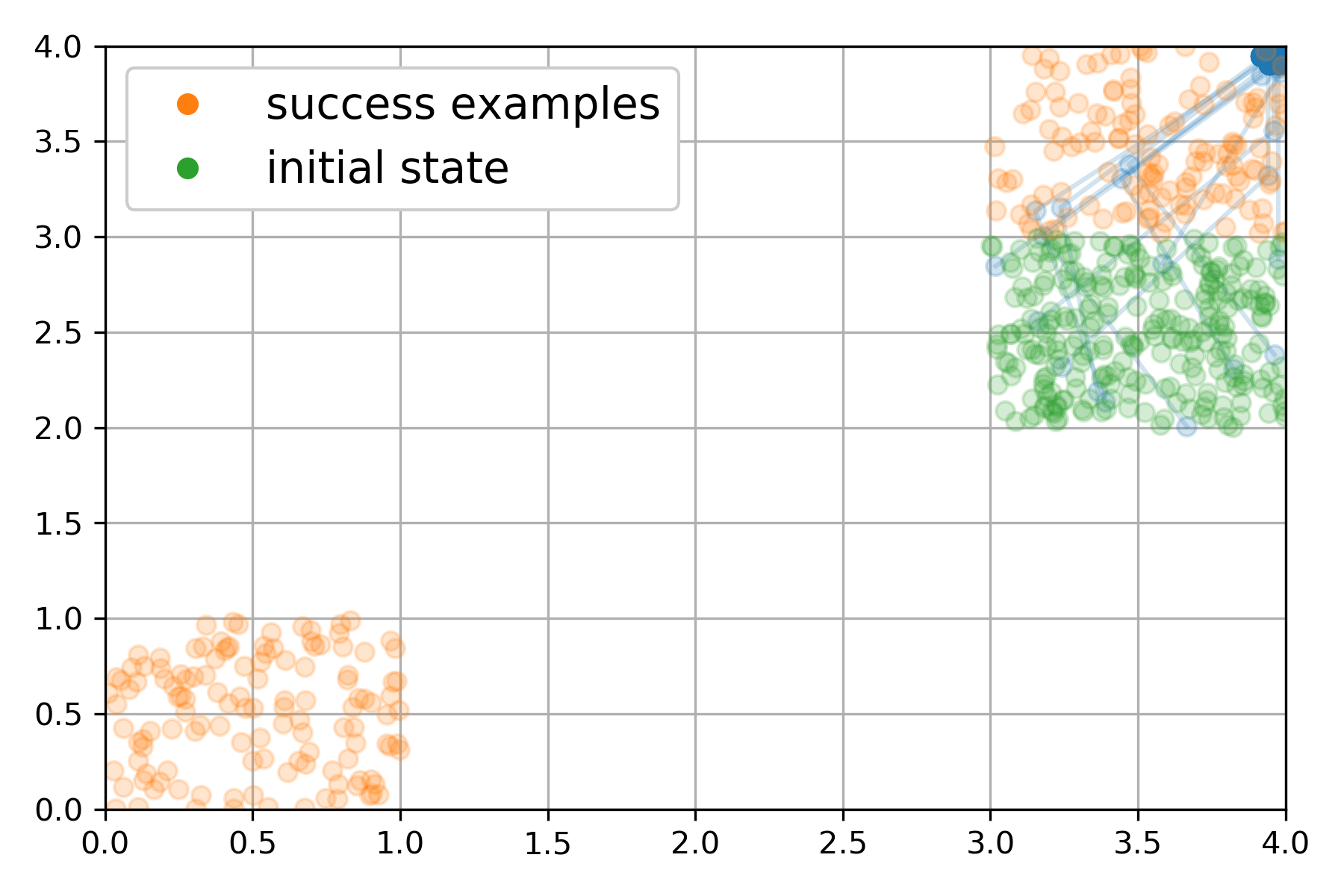}
\caption{Offline RCE}
\end{subfigure}%
~
\begin{subfigure}[c]{0.5\linewidth}
\includegraphics[width=\linewidth]{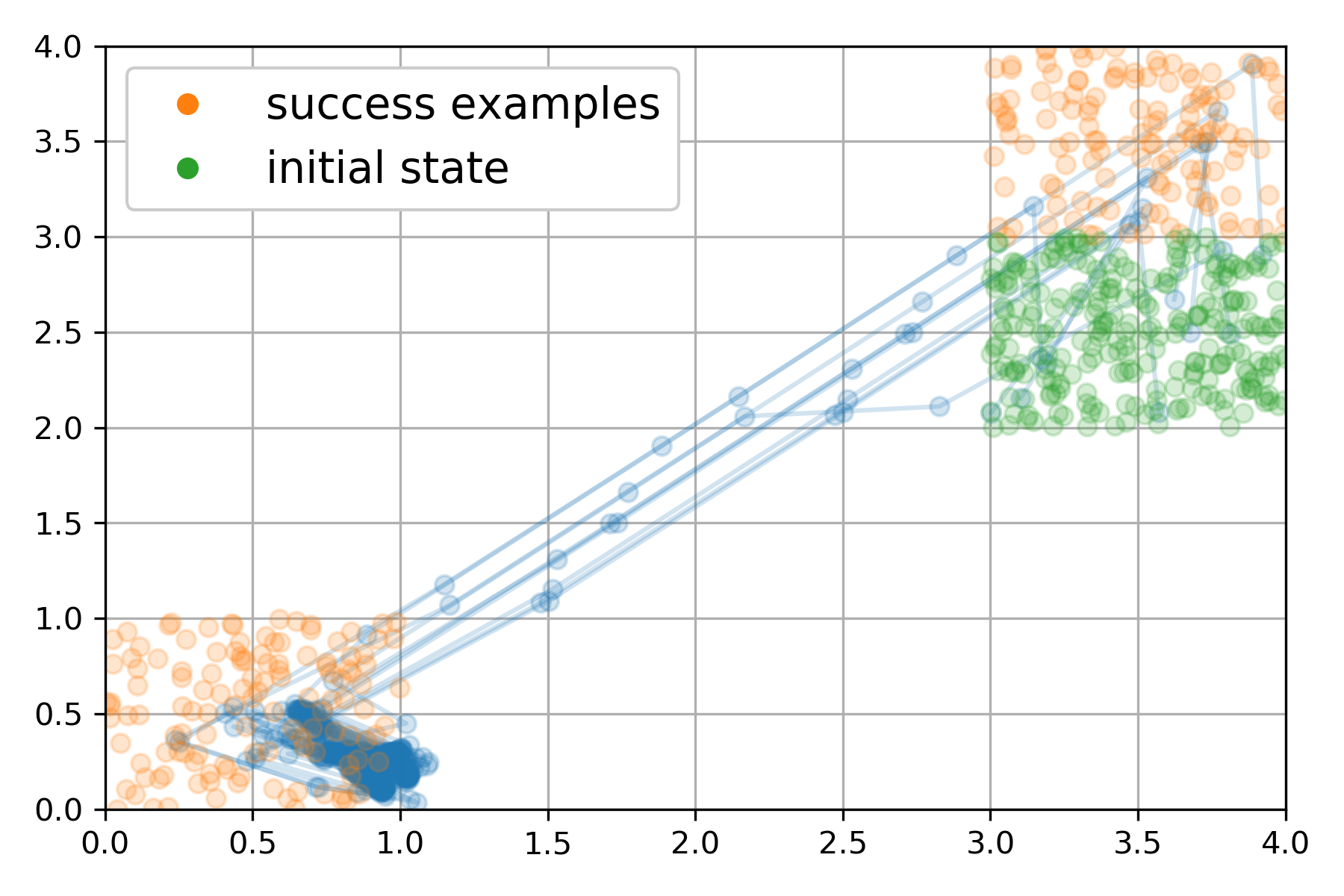}
\caption{Iterated RCE}
\end{subfigure}
\caption{{\footnotesize A 2D navigation task with two sets of success states. \figleft \, When we apply RCE to this task without online data collection, the agent navigates directly toward the closer set of success states. \figright \, When we run RCE in an iterated fashion, periodically collecting new transitions using the current policy, the learned policy visits both sets of success states. The behavior of iterated RCE is similar to what we would expect from robust example-based control.}}
\label{fig:robust}
\end{wrapfigure}
As noted in Sec.~\ref{sec:robust}, we found empirically that the policy found by RCE with online data collection was similar to the optimal policy for the robust example-based control objective. We visualize that experiment in Fig.~\ref{fig:robust}. In this 2D navigation task there are two sets of success examples (orange circles). When we apply RCE to this task using a uniform distribution of transitions, the agent navigates directly towards the closer success examples. However, when we run RCE in an iterative fashion, periodically collecting data from the current policy, the final policy visits both sets of success examples. This behavior is similar to what we would expect from the optimal policy to robust example-based control
We hypothesize that this happens because our implementation of RCE periodically collects new data using the current policy, violating the assumption that the dataset of transitions $p(\cs, \ca, \ns)$ is fixed. When the policy frequently visits one success example, that state will be included many times in the dataset of transitions $p(\cs, \ca, \ns)$ at the next iteration, so the classifier's predictions will decrease for that state.

We now show that (under a very strong assumption), the solution to the robust example-based control problem is a fixed point of an iterative version of RCE. The iterated version of RCE is defined by alternating between collecting a dataset of transitions using the current policy and then running RCE (Alg.~\ref{alg:method}) on those trajectories:
\begin{equation}
  \pi \tikzmark{a} \gets \text{RCE}(\gS^*, \gD) , \qquad \gD \gets \tikzmark{b} \{\tau \sim \pi\}.
  \begin{tikzpicture}[overlay,remember picture,out=315,in=225,distance=1cm]
    \draw[<-,blue,shorten >=5pt,shorten <=5pt, line width=0.5mm] (a.center) to (b.center);
  \end{tikzpicture}
  \begin{tikzpicture}[overlay,remember picture,out=45,in=135,distance=1cm]
    \draw[->,blue,shorten >=5pt,shorten <=5pt, line width=0.5mm] (a.north) to (b.north);
  \end{tikzpicture}
\end{equation}
\vspace{1em}

\begin{lemma}
Let $\pi$ be an optimal policy for robust example-based control (Eq.~\ref{eq:robust-obj}). Assume that $\pi$ visits each success example with probability proportional to how often that state occurs as a success example:
\begin{equation}
    \exists \; 0 < c \le 1 \; \text{ s.t. }\; p^\pi(\fs = s) = c \cdot p(\cs = s \mid \e = 1) \; \forall s \in \gS. \label{eq:lemma-assumption}
\end{equation}
Then $\pi$ is a fixed point for iterated RCE.
\end{lemma}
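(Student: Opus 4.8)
The plan is to show that once the transition dataset is collected by $\pi$ itself, the reward that RCE effectively optimizes becomes \emph{constant} across states, so that $\pi$ — like every policy — is optimal, and is therefore a legitimate output of RCE.

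First I would make the role of online data collection explicit. One step of iterated RCE collects transitions from $\pi$, so the fixed state marginal $p_U(\cs)$ that enters the assumption of Eq.~\ref{eq:assumption} is replaced by the discounted state-occupancy measure of $\pi$, i.e. $p_U(\cs) = p^\pi(\fs = \cs)$. This identification is exactly the ``very strong assumption'' flagged in the statement, and I expect it to be the main obstacle: a finite replay buffer need not literally equal the discounted occupancy, so one must either posit this equality as the operative modeling assumption or argue it holds in an idealized limit.

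Next I would substitute the lemma's hypothesis into the data-driven reward. By Lemma~\ref{lemma:equivalence}, RCE performs value iteration with reward $r(\cs) = (1-\gamma)\,p(\e = 1 \mid \cs)$, and Eq.~\ref{eq:bayes} expresses $p(\e = 1 \mid \cs) = \frac{p_U(\cs \mid \e = 1)}{p_U(\cs)}\,p_U(\e = 1)$. Identifying the success-example distribution $p_U(\cs \mid \e = 1)$ with the $p(\cs \mid \e = 1)$ appearing in Eq.~\ref{eq:lemma-assumption} and plugging in $p_U(\cs) = p^\pi(\fs = \cs) = c\,p_U(\cs \mid \e = 1)$, the conditional distribution cancels:
\begin{equation*}
    p(\e = 1 \mid \cs) = \frac{p_U(\e = 1)}{c},
\end{equation*}
a quantity independent of $\cs$. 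Hence $r(\cs)$ is constant over $\gS$.

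Finally I would exploit the constant reward. With $p(\e = 1 \mid \cs) \equiv k$ constant, the Bellman identity of Lemma~\ref{lemma:bellman} reduces to the affine recursion $x \mapsto (1-\gamma)k + \gamma x$ for the ratio $C^\pi/(1-C^\pi)$, whose unique fixed point (using $\gamma < 1$) is the constant $k$; thus $C^\pi(\cs, \ca)/(1 - C^\pi(\cs, \ca))$ takes the same value at every $(\cs, \ca)$. Because the classifier does not discriminate between actions, the greedy improvement step of Lemma~\ref{lemma:pi} is indifferent, so $\pi$ lies in the set of greedy policies and RCE can return it unchanged. Therefore $\pi$ is a fixed point of iterated RCE. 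The only genuinely delicate step is the data-collection identification in the first paragraph; the remaining algebra and the degeneracy argument are routine.
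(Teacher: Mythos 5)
Your proposal is correct and follows essentially the same route as the paper: identify the user's marginal with the policy's occupancy measure under online data collection, apply Bayes' rule (Eq.~\ref{eq:bayes}) to conclude $p(\e = 1 \mid \cs)$ is constant, and observe that a constant implicit reward makes every policy (hence $\pi$) optimal, so $\pi$ is a fixed point. Your additional steps—computing the constant $p_U(\e=1)/c$ explicitly and unrolling the Bellman recursion to show the classifier ratio is constant—are a slightly more detailed elaboration of the same degeneracy argument the paper gives.
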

\begin{proof}
If $p^\pi(\fs) = p(\cs \mid \e = 1)$, then Eq.~\ref{eq:bayes} tells us that $p(\e = 1 \mid \cs)$ is a constant for all states. In this setting, all policies are optimal under our objective (Def.~\ref{def:event-based-control}), so the current policy $\pi$ is also optimal. Thus, any policy satisfying $p^\pi(\fs) = c \cdot p(\cs \mid \e = 1)$ is a fixed point of iterated RCE.
\end{proof}
When we are given a discrete set of success examples, the assumption in Eq.~\ref{eq:lemma-assumption} is equivalent to assuming that the optimal robust policy spends an equal fraction of its time at each success example.

\begin{lemma}
Assume there exists a policy $\pi$ such that $p^\pi(\fs) = p(\cs \mid \e = 1)$. Then the solution to robust example-based control is a fixed point of iterated RCE.
\end{lemma}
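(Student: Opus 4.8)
The plan is to chain Lemma~\ref{lemma:robust} (which recasts robust example-based control as Hellinger-distance minimization) together with the argument already used in the preceding lemma. First I would invoke Lemma~\ref{lemma:robust}: the robust objective equals $1 - \tfrac{1}{2} H^2[p(\cs \mid \e = 1), p^\pi(\fs = \cs)]$, so maximizing it over $\pi$ is equivalent to minimizing the squared Hellinger distance between the policy's discounted state occupancy $p^\pi(\fs)$ and the success-example distribution $p(\cs \mid \e = 1)$. Since $H^2 \ge 0$ with equality precisely when the two distributions coincide, any policy $\pi$ with $p^\pi(\fs) = p(\cs \mid \e = 1)$ attains the global optimum $H^2 = 0$ and is therefore a solution to robust example-based control (Eq.~\ref{eq:robust-obj}). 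The hypothesis guarantees that such a $\pi$ exists, so ``the solution to robust example-based control'' is exactly a policy of this form.

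Next I would show that this $\pi$ is a fixed point of iterated RCE, reusing the reasoning of the previous lemma with $c = 1$. In iterated RCE the transition dataset is regenerated on-policy, so the state marginal $p_U(\cs)$ entering the RCE assumption (Eq.~\ref{eq:assumption}) equals the policy's discounted occupancy $p^\pi(\fs)$. Substituting $p_U(\cs) = p^\pi(\fs) = p(\cs \mid \e = 1)$ into the Bayes identity (Eq.~\ref{eq:bayes}) yields $p(\e = 1 \mid \cs) = p_U(\e = 1)$, a constant independent of the state. By Lemma~\ref{lemma:equivalence}, RCE performs value iteration with reward $r(\cs) = (1 - \gamma) p(\e = 1 \mid \cs)$; when this reward is constant, every policy attains identical value, so the greedy policy-improvement step (Lemma~\ref{lemma:pi}) cannot strictly improve on $\pi$, and $\pi$ remains among the optimal policies. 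Hence one pass of RCE on the on-policy data returns a policy equivalent to $\pi$, and the outer loop regenerates the same occupancy distribution, so $\pi$ is a fixed point.

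The main obstacle I anticipate is making the on-policy substitution $p_U(\cs) = p^\pi(\fs)$ precise: one must argue that when the dataset is collected under $\pi$, the state marginal appearing in the RCE loss is genuinely the \emph{discounted} occupancy $p^\pi(\fs)$ (matching the discounting built into the future-state distribution of Eq.~\ref{eq:future-prob}), rather than the undiscounted stationary distribution. A secondary subtlety is the tie-breaking in the greedy step of Lemma~\ref{lemma:pi}: with a constant reward every state-action pair shares the same $Q$-value, so the $\argmax$ over actions is non-unique; but because $\pi$ itself lies among the maximizers, the fixed-point claim holds regardless of how ties are resolved. Once these two points are settled, the chain Lemma~\ref{lemma:robust} $\to$ constant reward via Eq.~\ref{eq:bayes} $\to$ value-iteration invariance via Lemma~\ref{lemma:equivalence} closes the argument, recovering the preceding lemma as the special case $c = 1$.
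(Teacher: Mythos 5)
Your proposal is correct and follows essentially the same route as the paper's proof: use Lemma~\ref{lemma:robust} plus the fact that an $f$-divergence is minimized when its arguments coincide to identify the robust-optimal policy as one with $p^\pi(\fs) = p(\cs \mid \e = 1)$, then observe that under on-policy data collection this makes $p(\e = 1 \mid \cs)$ constant via Eq.~\ref{eq:bayes}, so every policy is optimal and $\pi$ is a fixed point. Your version is somewhat more careful than the paper's (which glosses over the discounted-occupancy identification and cites Eq.~\ref{eq:assumption} where Eq.~\ref{eq:bayes} is the identity actually being used), but the argument is the same.
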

\begin{proof}
As shown in Lemma~\ref{lemma:robust}, the robust example-based control corresponds to minimizing the squared Hellinger distance, an $f$-divergence. All $f$-divergences are minimized when their arguments are equal (if feasible), so the solution to robust  example-based control is a policy $\pi$ satisfying $p^\pi(\fs) = p(\cs \mid \e = 1)$.

Now, we need to show that such a policy is a fixed point of iterated RCE. If $p^\pi(\fs) = p(\cs \mid \e = 1)$, then Eq.~\ref{eq:assumption} tells us that $p(\e = 1 \mid \cs)$ is a constant for all states. In this setting, all policies are optimal under our objective (Def.~\ref{def:event-based-control}), so the current policy $\pi$ is also optimal. Thus, any policy satisfying $p^\pi(\fs) = p(\cs \mid \e = 1)$ is a fixed point of iterated RCE.
\end{proof}
While this result offers some explanation for why iterated RCE might be robust, the result is quite weak. First, the assumption that $p^\pi(\fs) = p(\cs \mid \e = 1)$ for some policy $\pi$ is violated in most practical environments, as it would imply that the agent spends every timestep in states that are success examples. Second, this result only examines the fixed points of robust example-based control and iterated RCE, and does not guarantee that these methods will actually converge to this fixed point. We aim to lift these limitations in future work.

\section{Connection with Success Classifiers and Success Densities}
\label{appendix:connections}

In this section we show that objective function in prior methods for example-based control implicitly depend on where success examples came from.

\citet{nasiriany2020disco} learn a reward function by fitting a density model to success examples, $\hat{p}(s^*) \approx p(\cs \mid \e = 1)$. Using this learned density as a reward function is \emph{not} the same as our approach. Whereas our method implicitly corresponds to the reward function $r(\cs) = (1 - \gamma) p(\e = 1 \mid \cs)$, the reward function in~\citet{nasiriany2020disco} corresponds to $\exp r(\cs)  = p(\cs \mid \e = 1) = p(\e = 1 \mid \cs) p_U(\cs) / p(\e = 1)$. The additional $p_U(\cs)$ term in the (exponentiated) Disco RL reward function biases the policy towards visiting states with high density under the $p_U(\cs)$ marginal distribution, \emph{regardless of whether those states were actually labeled as success examples}. 

In contrast, VICE~\citep{fu2018variational} learns a classifier to distinguish success examples $s^* \sim p(\cs \mid \e = 1)$ from ``other'' states sampled from $q(\cs)$. The predicted probability ratio from the Bayes-optimal classifier is
\begin{equation*}
    \frac{p(\cs \mid \e = 1)}{q_E(\cs)} = \frac{p(\e = 1 \mid \cs) p_U(\cs)}{p(\e = 1) q(\cs)}.
\end{equation*}
This term depends on the user's state distribution $p_U(\cs)$, so without further assumptions, using this probability ratio as a reward function does not yield a policy that maximizes the future probability of success (Eq.~\ref{eq:future-prob}). We can recover example-based control if we make the additional assumption that $p_U(\cs) = q(\cs)$, an assumption not made explicit in prior work. Even with this additional assumption, VICE differs from RCE by requiring an additional classifier.

In summary, these prior approaches to example-based control rely on auxiliary function approximators and do not solve the example-based control problem (Def.~\ref{def:event-based-control}) without additional assumptions. In contrast, our approach is simpler and is guaranteed to yield the optimal policy.

\section{Experimental Details}
\label{appendix:details}

\subsection{Implementation}
This method is straightforward to implement on top of any actor-critic method, such as SAC or TD3, only requiring a few lines of code to be added. The only changes necessary are to (1) sample success in the train step and (2) swap the standard Bellman loss with that in Eq.~\ref{eq:ce}. Unless otherwise noted, all experiments were run with 5 random seeds. When plots in the paper combine the results across seeds, this is done by taking the average.

We based our implementation of RCE off the SAC implementation from TF-Agents~\citep{TFAgents}. We modified the critic loss to remove the entropy term and use our loss (Eq.~\ref{eq:ce}) in place of the Bellman loss, modified the TD targets to use n-step returns with $n = 10$, and modified the actor loss to use a fixed entropy coefficient of $\alpha = 1e-4$. To implement n-step returns, we replaced the target value of $y = \frac{\gamma w^{(t)}}{\gamma w^{(t)} + 1}$ with $y = \frac{1}{2} \left(\frac{\gamma w^{(t)}}{\gamma w^{(t)} + 1} + \frac{\gamma^{10} w^{(t+10)}}{\gamma^{10} w^{(t+10)} + 1} \right).$ We emphasize that implementing RCE therefore only required modifying a few lines of code. Unless otherwise noted, all hyperparameters were taken from that implementation (version v0.6.0). For the image-based experiments, we used the same architecture from CURL~\citep{srinivas2020curl}, sharing the image encoder between the actor and classifier. Initial experiments found that using random weights for the image encoder worked at least as well as actually training the weights of the image encoder, while also making training substantially faster. We therefore used a random (untrained) image encoder in all experiments and baselines on the image-based tasks. We used the same hyperparameters for all environments, with the following exception:
\begin{itemize}
    \item \texttt{sawyer\_bin\_picking}: The SAC implementation in TF-Agents learns two independent Q values and takes the minimum of them when computing the TD target and when updating the policy, as suggested by~\citet{fujimoto2018addressing}. However, for this environment we found that the classifier's predictions were too small, and therefore modified the actor and critic losses to instead take the maximum over the predictions from the two classifiers. We found this substantially improved performance on this task.
\end{itemize}

\subsection{Baselines.} We implemented the SQIL baseline as an ablation to our method. The two differences are (1) using the standard Bellman loss instead of our loss (Eq.~\ref{eq:ce}) and (2) not using n-step returns. We implemented all other baselines on top of the official DAC repository~\citep{kostrikov2018discriminator}. For fair comparison with our method, the classifier was only conditioned on the state, not the action. These baselines differed based on the following hyperparameters:
\begin{table}[H]
    \centering
    \begin{tabular}{c|c|c|c}
        Method name & gradient penalty & discriminator loss & absorbing state wrapper \\ \hline
        DAC & 10 & \texttt{cross\_entropy} & yes\\
        AIRL & 0 & \texttt{cross\_entropy} & no \\
        ORIL-PU & 0 & \texttt{positive\_unlabeled} & no \\
        PURL & 0 & \texttt{positive\_unlabeled\_margin} & no \\
    \end{tabular}
    \vspace{0.5em}
    \caption{Classifier-based baselines}
\end{table}
The done bit wrapper is the idea discussed in DAC~\citep[Sec. 4.2]{kostrikov2018discriminator}. The \texttt{cross\_entropy} loss is the standard cross entropy loss. The \texttt{positive\_unlabeled} loss is the standard positive-unlabeled loss~\citep{elkan2008learning}; following~\citet[Appendix B, Eq. 5]{zolna2020offline} we used $\eta = 0.5$. The \texttt{positive\_unlabeled\_margin} loss is taken from~\citet[Eq. 9]{xu2019positive}, as are the hyperparameters of $\eta = 0.5, \beta = 0.0$.

\subsection{Environments}
\label{appendix:environments}

For both the state-based experiments and the image-based experiments, we choose tasks using two design criteria: (1) we wanted to include a breadth of tasks, and (2) we wanted to include the most difficult tasks that could be learned by either our method or any of the baselines. For the image-based tasks, we were unable to get the prior methods to learn tasks more complex than those included in Fig.~\ref{fig:reach-generalization}.

Our experiments used benchmark manipulation tasks from Metaworld (Jan. 31, 2020)~\citep{yu2020meta} and D4RL (Aug. 26, 2020)~\citep{fu2020d4rl, rajeswaran18learning}. Unless otherwise mentioned, we used the default parameters for the environments. The reward functions for some of the Sawyer tasks contain additional reward shaping terms. As we describe below, we remote these shaping terms to more directly evaluate success. 
\begin{enumerate}
    \item \texttt{sawyer\_drawer\_open}: This task is based on the \texttt{SawyerDrawerOpenEnv} from~\citet{yu2020meta}. We generated success examples by sampling the drawer Y coordinate from \texttt{unif}(low=-0.25, high=-0.15) and moving the gripper to be touching the drawer handle. Episodes were 151 steps long. For evaluation, we used the net distance that the robot opened the drawer (displacement along the negative Y axis) as the reward.
    \item \texttt{sawyer\_push}: This task is based on the \texttt{SawyerReachPushPickPlaceEnv} from~\citet{yu2020meta}, using \texttt{task\_type = `push'}. We generated success examples by sampling the puck XY position from \texttt{unif(low=(0.05, 0.8), high=(0.15, 0.9))} and moving the gripper to be touching the puck. Episodes were 151 steps long. For evaluation, we used the net distance that the puck traveled towards the goal as the reward.
    \item \texttt{sawyer\_lift}: This task is based on the \texttt{SawyerReachPushPickPlaceEnv} from~\citet{yu2020meta}, using \texttt{task\_type = `reach'}. We generated success examples by sampling the puck Z coordinate from \texttt{unif}(low=0.08, high=0.12) and moving the gripper such that the puck was inside the gripper. Episodes were 151 steps long. For evaluation, we used the Z coordinate of the puck at the final time step.
    \item \texttt{sawyer\_box\_close}: This task is based on the \texttt{SawyerBoxCloseEnv} from~\citet{yu2020meta}. We generated success examples by positioning the lid on top of the box and positioning the gripper so that the lid handle was inside the gripper. Episodes were 151 steps long. For evaluation, we used the net distance that the lid traveled along the XY plane towards the center of the box.
    \item \texttt{sawyer\_bin\_picking}: This task is based on the \texttt{SawyerBinPickingEnv} from~\citet{yu2020meta}. We generated success examples by randomly positioning the object in the target bin, sampling the XY coordinate from \texttt{unif(low=(0.06, 0.64), high=(0.18, 0.76))} and positioning the gripper so that the object was inside the gripper. Episodes were 151 steps long. For evaluation, we used the net distance that the object traveled along the XY plane towards the center of the target bin.
    \item \texttt{door-human-v0}, \texttt{hammer-human-v0}: These tasks are described in~\citep{rajeswaran18learning}. We generated success examples by taking a random subset of the last 50 observations from each trajectory of demonstrations provided for these tasks in~\citep{fu2020d4rl}. We used the default episode length of 200 for these tasks. For evaluation, we used the default reward functions for these tasks (which are a few orders of magnitude larger than the net-distance rewards used for evaluating the sawyer tasks.). 
    \item \texttt{sawyer\_reach\_random\_position\_image}, \texttt{sawyer\_reach\_random\_size\_image}: These image-based reaching tasks are based on the \texttt{SawyerReachPushPickPlaceEnv} task from~\citet{yu2020meta}, using \texttt{task\_type = `reach'}. We generated success examples by positioning the gripper at the object. Episodes were 51 steps long. For evaluation, we used the net distance traveled towards the object as the reward. The \texttt{sawyer\_reach\_random\_position\_image} modified the environment to initialize each episode by randomly sampling the object position from \texttt{unif(low=(-0.02, 0.58), high=(0.02, 0.62))}. The \texttt{sawyer\_reach\_random\_size\_image} modified the environment to initialize each episode by randomly resizing the object by sampling a radius and half-height from \texttt{unif(low=(0.1, 0.005), high=(0.05, 0.045))}.
    \item \texttt{sawyer\_clear\_image}: This task is based on the \texttt{SawyerReachPushPickPlaceEnv} from~\citet{yu2020meta}, using \texttt{task\_type = `push'}. We generated success examples by simply deleting the object from the scene and setting the gripper to a random position drawn from \texttt{unif(low=(-0.2, 0.4, 0.02), high=(0.2, 0.8, 0.3))}. For evaluation, we used a reward of 1 if the object was out of sight, as defined by having a Y coordinate less than 0.41 or greater than 0.98.
\end{enumerate}

\section{Ablation Experiments and Visualization}
\label{appendix:ablation}

\begin{figure}[!t]
    \centering
    \vspace{-1em}
    \begin{subfigure}[b]{0.49\textwidth}
        \includegraphics[width=\linewidth]{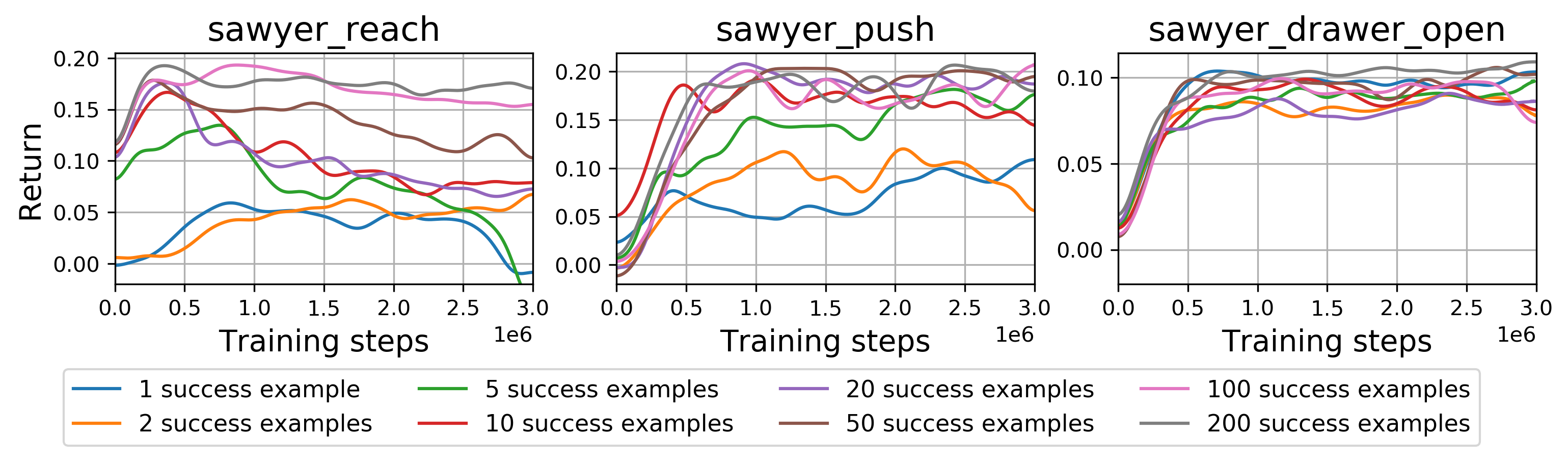}
        \caption{Number of success examples}
        \label{fig:num-expert-obs-ablation}
    \end{subfigure}
    \begin{subfigure}[b]{0.49\textwidth}
        \includegraphics[width=\linewidth]{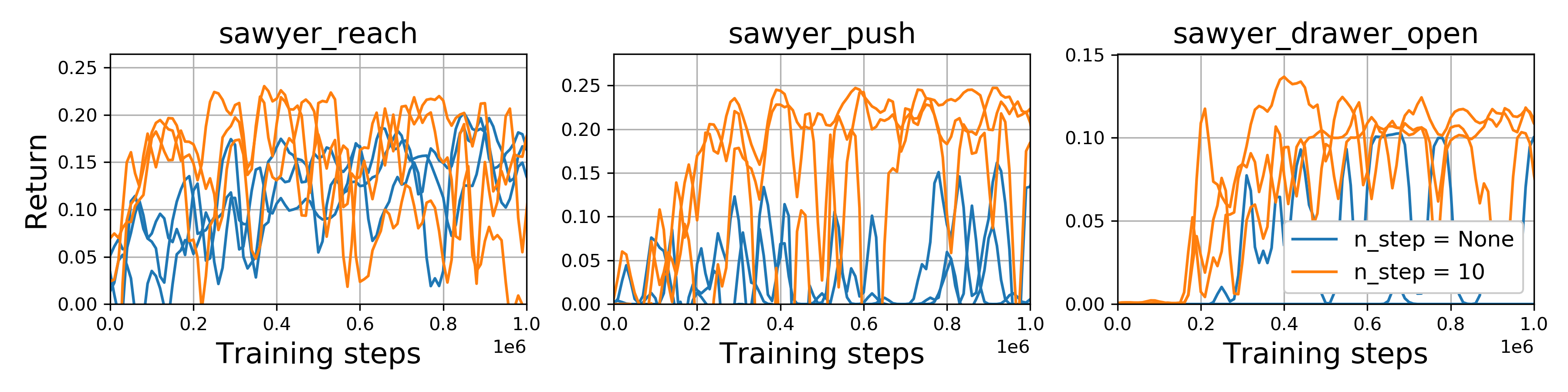}
        \vspace{0.1em}
        \caption{N-step returns}
        \label{fig:n-step-ablation}
    \end{subfigure}
    \begin{subfigure}[b]{0.49\textwidth}
        \includegraphics[width=\linewidth]{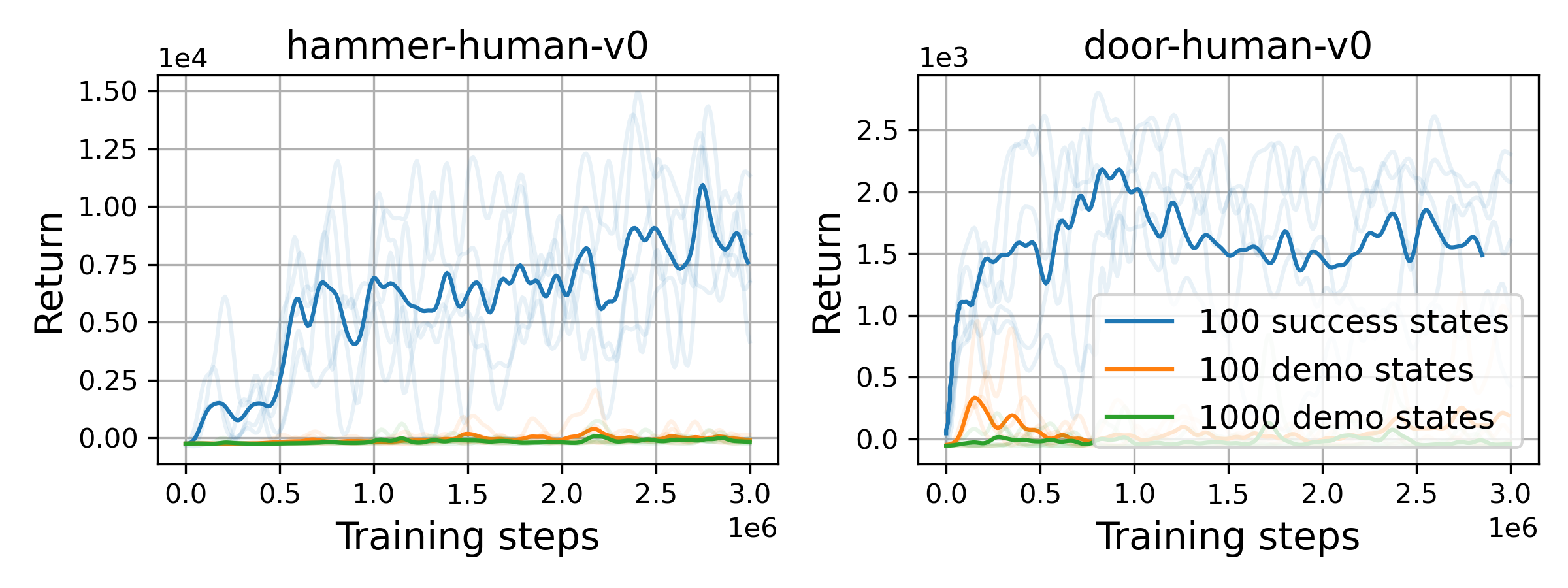}
        \caption{Using all states from demonstrations.} \label{fig:demos}
    \end{subfigure}
    \begin{subfigure}[b]{0.49\textwidth}
        \includegraphics[width=\linewidth]{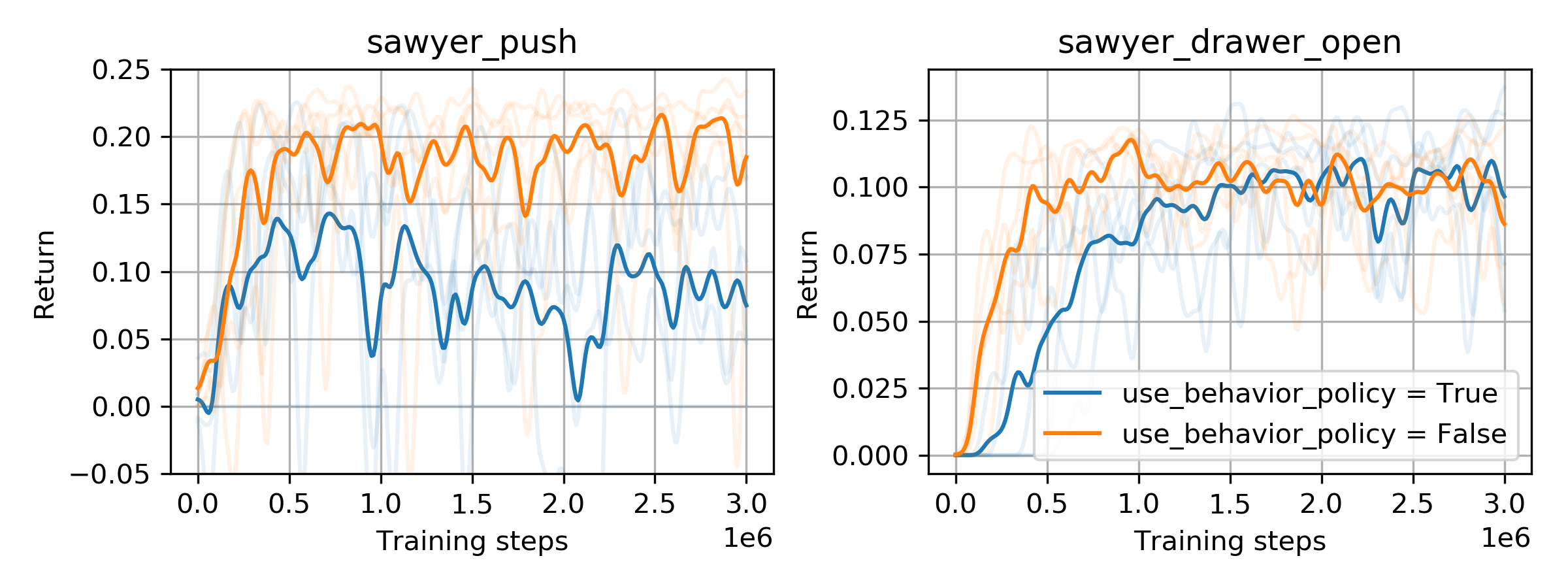}
        \caption{Actions for success examples}
        \label{fig:action-ablation}
    \end{subfigure}
    \begin{subfigure}[b]{0.49\textwidth}
        \includegraphics[width=\linewidth]{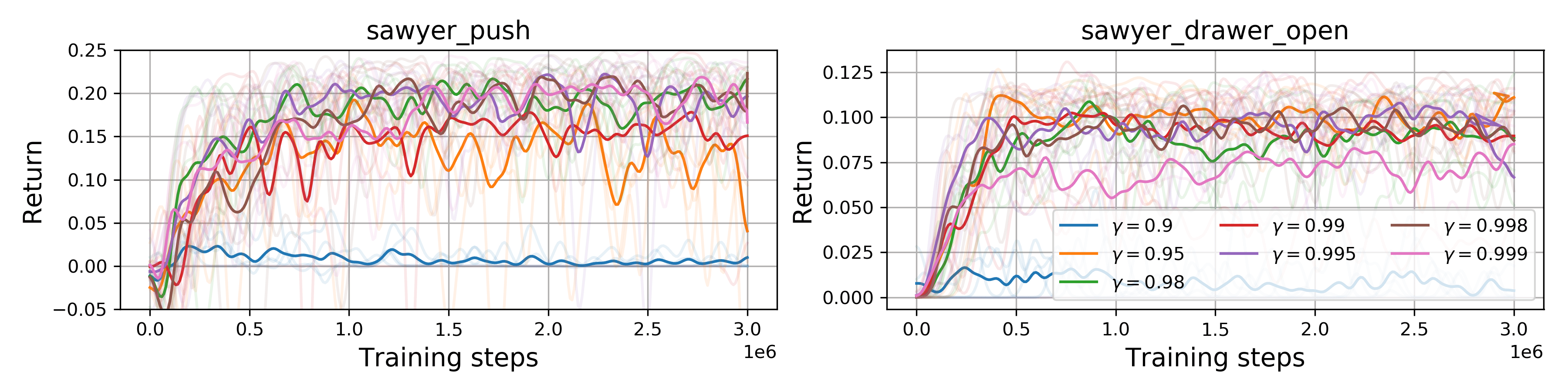}
        \caption{Ablation of discount factor.} \label{fig:gamma}
    \end{subfigure}
    \begin{subfigure}[b]{0.49\textwidth}
        \includegraphics[width=\linewidth]{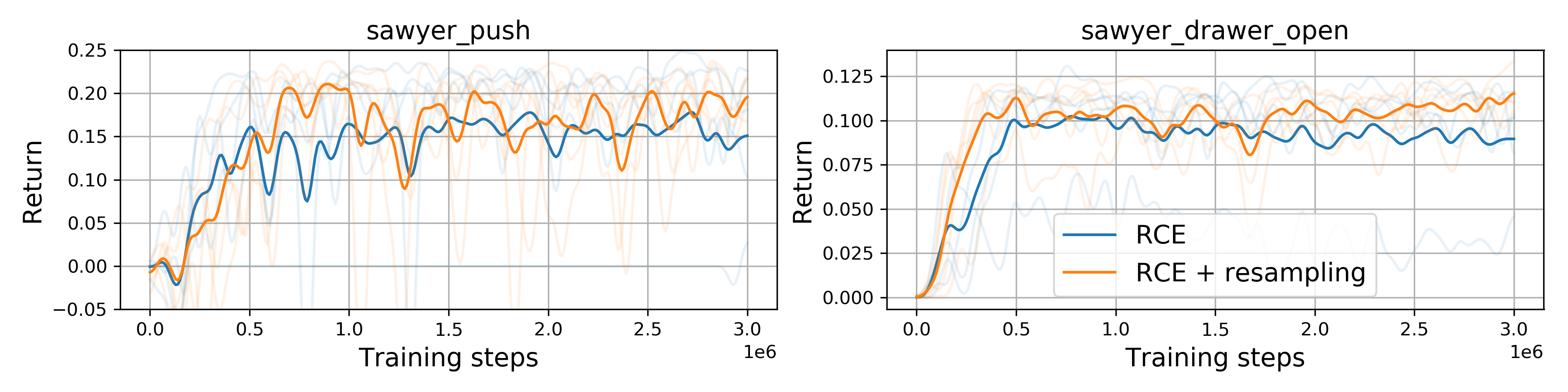}
        \caption{Sampling new success examples.} \label{fig:resampling}
    \end{subfigure}
    \begin{subfigure}[b]{0.49\textwidth}
        \includegraphics[width=\linewidth]{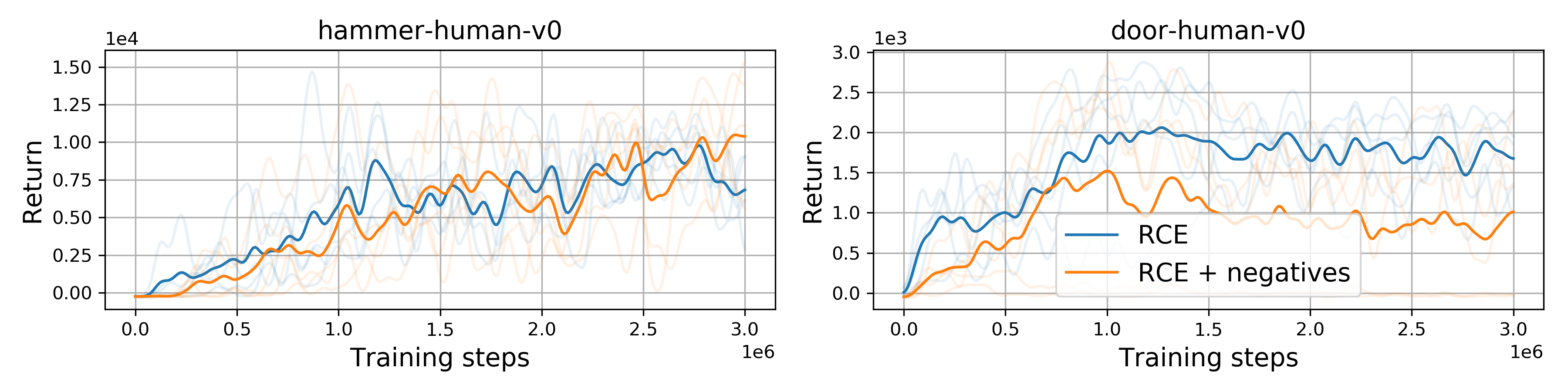}
        \caption{Using negative examples.} \label{fig:negatives}
    \end{subfigure} 
    \caption{\textbf{Ablation Experiments} \label{fig:ablation}}
\end{figure}

We ran a number of ablation experiments to further understand our method.

\paragraph{Number of success examples.} We studied performance when we used fewer than 200 success examples. As shown in Fig.~\ref{fig:num-expert-obs-ablation}, we can achieve comparable performance using 100 examples for the  \texttt{sawyer\_reach} task, 20 examples for the \texttt{sawyer\_push} task, and just 1 example for the \texttt{sawyer\_drawer\_open} task. We ran each experiment with 5 random seeds, but only plot the mean across seeds to decrease clutter.

\paragraph{N-step returns.}
An important implementation detail of RCE is using n-step returns. Fig.~\ref{fig:n-step-ablation} shows that using n-step returns substantially improved performance on the \texttt{sawyer\_push} and \texttt{sawyer\_drawer\_open} tasks, and had a smaller positive effect on the \texttt{sawyer\_reach} task. We used n-step returns with $n = 10$ for all other experiments with our method.

\paragraph{Applying RCE to expert trajectories.}
RCE is intended to be applied to a set of success examples. Applying this same method to states sampled randomly from expert demonstrations does not work, as shown in Fig.~\ref{fig:demos}. Not only is requiring intermediate states from demonstrations more onerous for users, but also RCE performs worse in this setting. Interestingly, we found the same trend for the SQIL baseline, which (unlike our method) was designed to use states from demonstrations, not just success states. 
Thus, while SQIL is designed as an imitation learning method, it may actually be solving a version of example based control.

\paragraph{Sampling actions for success examples.}
We studied the approximation made in Sec.~\ref{sec:method}, where we sampled actions for the success examples using our current policy instead of the historical average policy. We fit a new behavior policy to our replay buffer using behavior cloning and used this behavior policy to sample actions for the success examples. Fig.~\ref{fig:action-ablation} shows that simply using our current policy instead of this behavior policy worked at least as well, while also being simpler to implement.

\paragraph{New success examples during training.}
RCE is intended to be applied in the setting where a set of success examples is provided before training. Nonetheless, we can apply RCE to a set of success examples that is updated throughout training. We implemented this variant of RCE, sampling the success set anew every 100k iterations. Fig.~\ref{fig:resampling} shows that sampling new success examples online during training does not affect the performance of RCE. 

\paragraph{Negative examples.}
In this ablation experiment, we modified RCE by training the classifier to predict 0 for a set of ``negative'' states. We implemented this variant of RCE, using samples from the initial state (the task is never solved at the initial state) as negatives. Fig.~\ref{fig:negatives} shows that this variant performs about the same on the hammer task, but worse on the door task.

\paragraph{Discount factor, $\gamma$.}
As shown in Fig.~\ref{fig:gamma}, the performance of RCE is indistinguishable for a wide range of discount factors (0.95 to 0.998). RCE does perform worse for very small (0.9) and very large (0.999) discount factors. Note that we did not tune this parameter for the experiments in the main paper, and used a value of $\gamma = 0.99$ for all experiments there.

\begin{figure}[ht]
    \centering
    \begin{minipage}{0.5\textwidth}
       \centering
        \begin{subfigure}[c]{0.5\linewidth}
        \includegraphics[width=\linewidth]{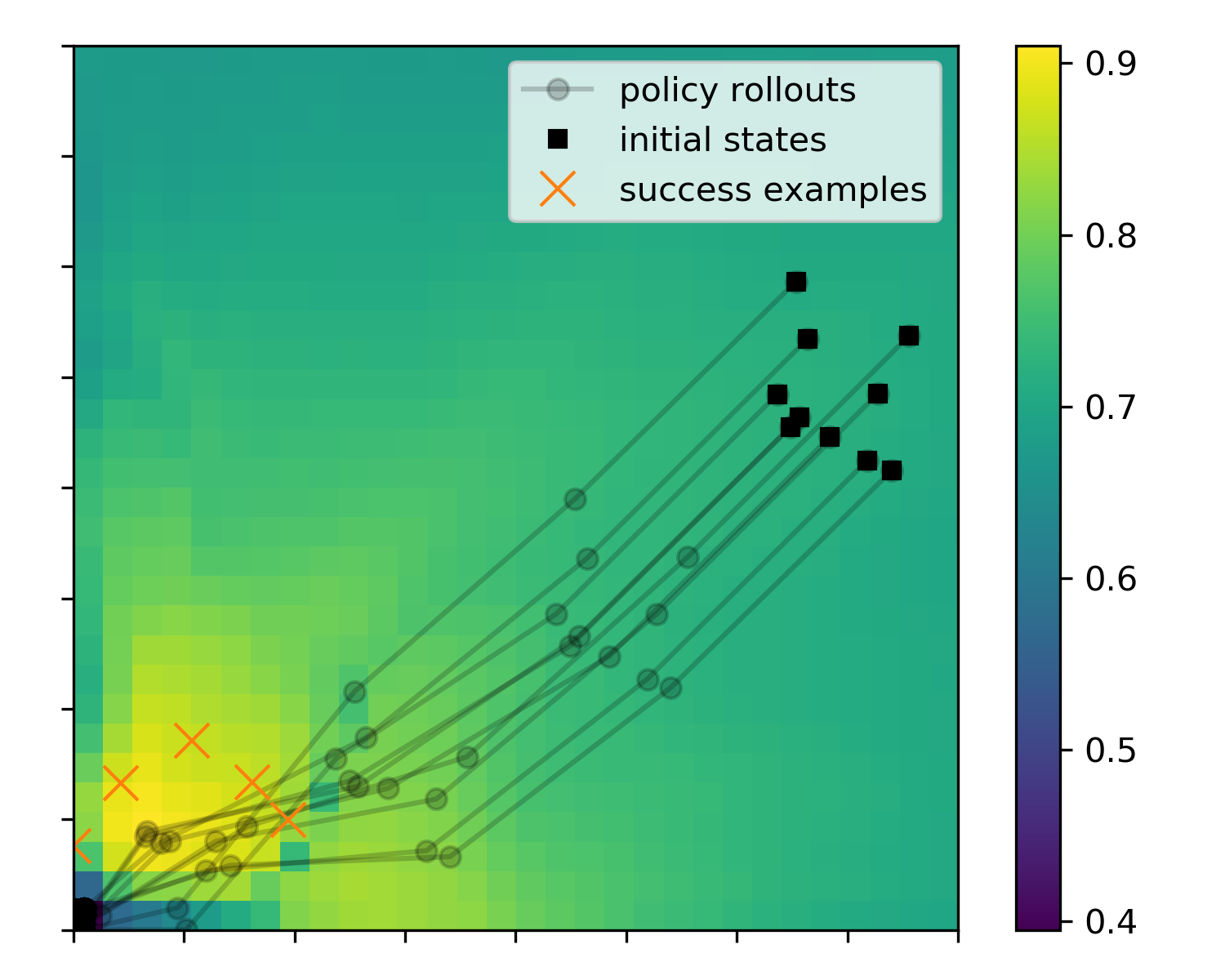}
        \caption{20k iterations}
        \end{subfigure}%
        ~
        \begin{subfigure}[c]{0.5\linewidth}
        \includegraphics[width=\linewidth]{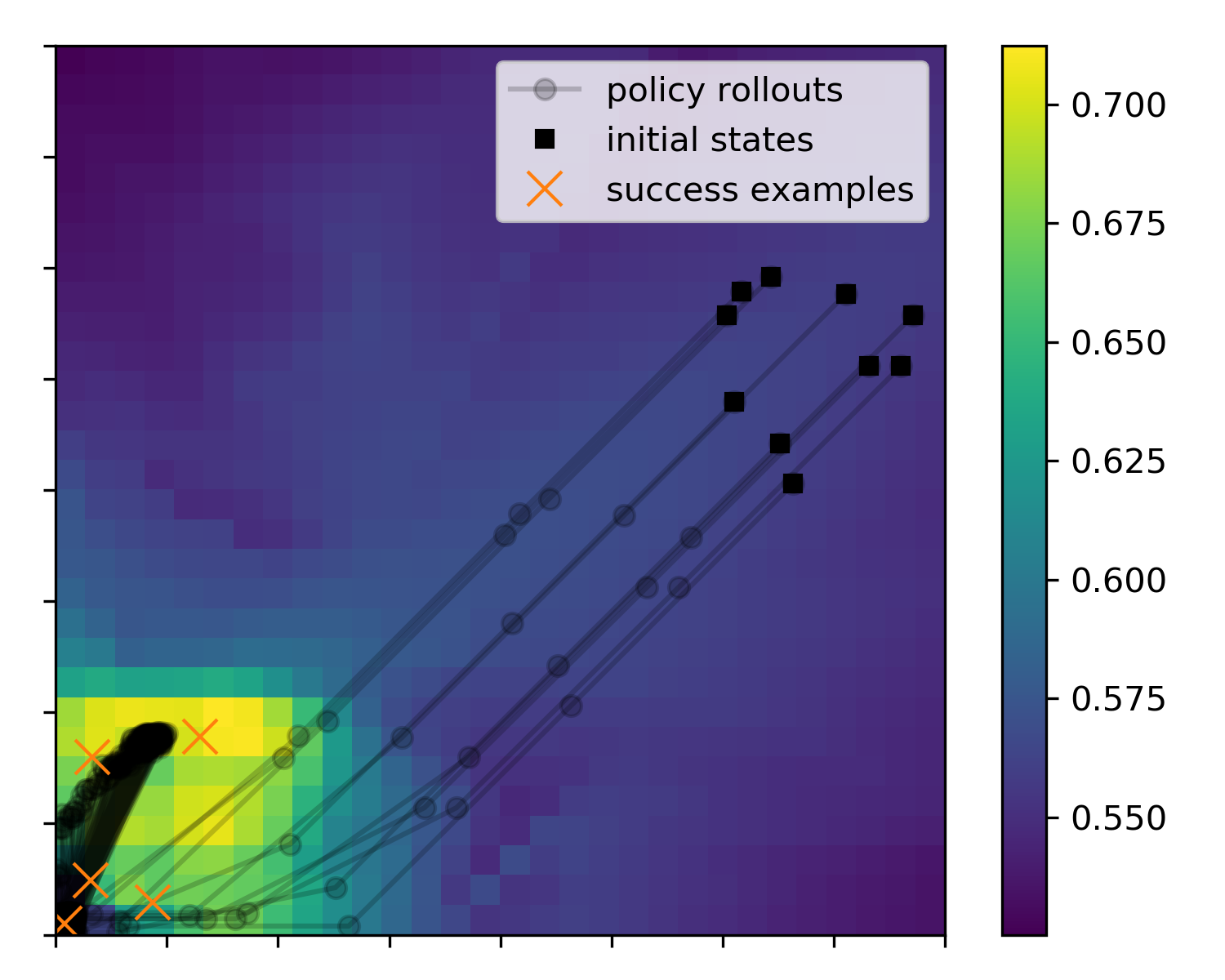}
        \caption{80k iterations}
        \end{subfigure}
        \caption{\footnotesize Visualizing the classifier's predictions and policy rollouts throughout training. To avoid visual clutter, we plot a random subset of the success examples.}
        \label{fig:classifier-viz}
    \end{minipage}
    \hfill
    \begin{minipage}{0.45\textwidth}
    \centering
        \includegraphics[width=\linewidth]{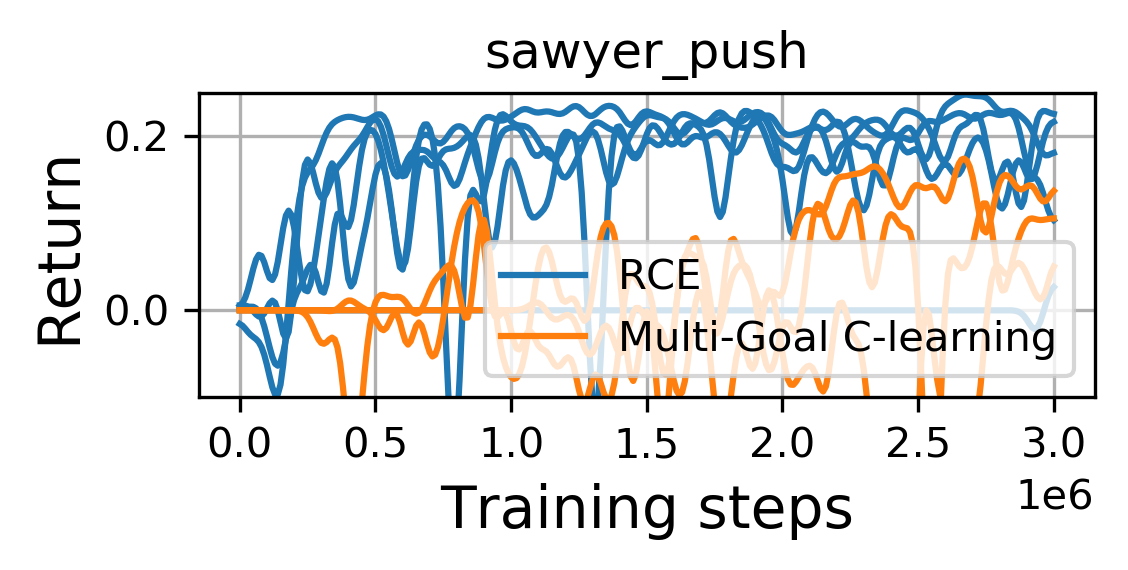}
        \caption{\footnotesize We compare RCE against an alternative approach based on goal-conditioned RL.} \label{fig:cim}
    \end{minipage}
\end{figure}

\subsection{Visualizing Learning Dynamics}
To better illustrate the mechanics of our method, we ran RCE on a simple 2D navigation task. We then visualized the classifier's predictions throughout training. As shown in Fig.~\ref{fig:classifier-viz}, the classifier's predictions are initially mostly uniform across all states, but later converse to predict much larger values near the success examples and lower values elsewhere.

\subsection{Comparison with C-Learning}
C-learning~\citep{eysenbach2020c} solves a different problem than RCE and thus cannot be directly applied to this setting. Specifically, C-learning learns a goal-conditioned policy from a dataset of transitions, whereas RCE learns a single policy (not conditioned on goals) from a dataset of transitions and another dataset of success examples. Nonetheless, we initially tried to solve example-based control problems using a version of C-learning modified to use success examples. However, we found that this method performed quite poorly. For example, on the sawyer-push task, RCE achieves an average return of 0.2 (see Fig.~\ref{fig:cim}) whereas this variant of C-learning achieved a return of about 0.07 (65\% worse). The poor performance of this variant of C-learning motivated us to design a new method from the ground-up, specifically designed to solve the example-based control problem. We will include a plot of these initial experiments in the final paper.

\section{Failed Experiments}

This section describes a number of experiments that we tried that did not work particularly well. We did not include any of these ideas in our final method.
\begin{enumerate}
    \item \textbf{Training the image encoder.} For the image-based experiments, we found that \emph{not} training the image encoder and just using random weights often worked better than actually learning these weights.
    \item \textbf{Alternative parametrization of the classifier.} By default, our classifier predicts values in the range $C_\theta \in [0, 1]$. However, Eq.~\ref{eq:posterior} says that the classifier's predicted probability ratio corresponds to
    \begin{equation*}
    p^\pi(\fe = 1 \mid \cs, \ca) = \frac{C_\theta^{(t)}}{1 - C_\theta^{(t)}} \in [0, 1].
    \end{equation*}
    We therefore expect that a learned classifier should only make predictions in the range $[0, 0.5]$. We experimented by parametrizing the classifier to only output values in $[0, 0.5]$, but found that this led to no noticeable change in performance.
    \item \textbf{Classifier regularization.} We experimented with regularizing our classifier with input noise, gradient penalties, mixup, weight decay and label smoothing. These techniques yield improvements that were generally small, so we opted to not include any regularization in our final method to keep it simple.
    \item \textbf{Regularizing the \emph{implicit} reward.} We can invert our Bellman equation (Eq.~\ref{eq:bellman}) to extract the reward function that has been implicitly learned by our method:
    \begin{equation*}
        r(\cs, \ca) = \frac{C_\theta^{(t)}}{1 - C_\theta^{(t)}} - \gamma \frac{C_\theta^{(t+1)}}{1 - C_\theta^{(t+1)}}.
    \end{equation*}
    We experimented with regularizing this implicit reward function to be close to zero or within $[0, 1]$, but saw no benefit.
    \item \textbf{Normalizing $w$.} We tried normalizing the TD targets $w$ (Eq.~\ref{eq:w}) to have range $[0, 1]$ by applying a linear transformation (subtracting the min and dividing by the max-min), but found that this substantially hurt performance.
    \item \textbf{Hyperparameter robustness.} We found that RCE was relatively robust to the learning rate, the discount factor, the Polyak averaging term for target networks, the batch size, the classifier weight initialization, and the activation function.
    \item \textbf{Imbalanced batch sizes.} Recall that our objective (Eq.~\ref{eq:ce}) uses a very small weight of $(1 - \gamma)$ for the first term. This small weight means that our effective batch size is relatively small. We experimented with changing the composition of each batch to include fewer success examples and more random examples, updating the coefficients for the loss terms such that the overall loss remained the same. While this does increase the effective batch size, we found it had little effect on performance (perhaps because our method is fairly robust to batch size, as noted above).
    \item \textbf{Optimistic initialization.} We attempted to encourage better exploration by initializing the final layer bias of the classifier so that, at initialization, the classifier would predict $C_\theta = 1$. We saw no improvements from this idea.
    \item \textbf{More challenging image-based tasks.} We found that RCE (and all the baselines) failed on image-based versions of the tasks in Fig.~\ref{fig:environments}. Our hypothesis is that this setting is challenging for RCE for the same reason that image-based tasks are challenging for off-policy RL algorithms: it is challenging to learn Q functions via temporal different learning on high-dimensional inputs. In future work, we aim to study the use of explicit representation learning to assist RCE in learning complex image-based tasks.

\end{enumerate}

\end{document}